\documentclass{article}
\usepackage[T1]{fontenc}
\usepackage{iclr2027_conference,times}
\iclrfinalcopy 
\renewcommand{\iclrruler}[1]{}
\usepackage{amsmath,amssymb,amsthm,mathtools,mleftright}

\usepackage{booktabs,tabularx}
\usepackage{microtype}
\usepackage{url}
\usepackage{xcolor}
\usepackage{graphicx}
\usepackage{hyperref}
  \hypersetup{hidelinks}
\usepackage{doi}  

\title{Exact-Solution Volume and \\Length Generalization in Transformers}
\author{Yijia Jessica Zhu and David Chiang \\ University of Notre Dame \\ \texttt{\{yzhu26,dchiang\}@nd.edu}}

\usepackage[capitalize,nameinlink]{cleveref}

\newtheorem{theorem}{Theorem}

\newtheorem{lemma}[theorem]{Lemma}
\theoremstyle{definition}
\newtheorem{definition}{Definition}

\newcommand{\FIRST}{\textnormal{\textsc{First}}}
\newcommand{\INDEX}{\textnormal{\textsc{Index}}}
\newcommand{\MAJORITY}{\textnormal{\textsc{Majority}}}
\newcommand{\PARITY}{\textnormal{\textsc{Parity}}}
\newcommand{\Apos}{\mathcal A_{\mathrm{pos}}}
\newcommand{\Acount}{\mathcal A_{\mathrm{gen}}}
\newcommand{\Vol}{V_B}
\newcommand{\R}{\mathbb R}
\newcommand{\1}{\mathbf 1}
\newcommand{\relu}{\operatorname{ReLU}}

\newcommand{\Anorm}{\mathcal A_{\mathrm{norm}}}
\newcommand{\Asin}{\mathcal A_{\mathrm{sin}}}

\begin{document}
\maketitle
\fancyhead{} 

\begin{abstract}
Research on transformer expressivity shows whether a transformer is capable of solving a given task, but gives little indication of whether the solution, if learned, is generalizable to longer input lengths. We study this question through \emph{normalized exact-solution volume (NESV)}: the fraction of a bounded parameter region that achieves an exact solution on every input of length $n$. For fixed-width, single-layer transformers with $\log n$-scaled attention, we establish asymptotic bounds on NESV for four tasks: \FIRST{} ($\Theta(1)$), \MAJORITY{} ($\Theta(1/(n\log n))$), \INDEX{} ($\Theta(1/n^3)$), and \PARITY{} ($0$). These results are consistent with previous empirical results: the faster the exact-solution volume decays with input length, the harder it is to length-generalize on that task. Looking deeper into \INDEX, our volume analysis reveals two error sources that grow with $n$. Consequently, we study a transformer model that would structurally eliminate one of the terms, theoretically improving the NESV bound to $\Theta(n^{-1})$, and empirically achieving 85\% accuracy when tested at $10\times$ the training length, compared with the 60\% accuracy of the original model. We conclude that volume analysis may be a useful approach to identify concrete sources of length sensitivity and thus provide insights into task-specific model refinements.
\end{abstract}

\section{Introduction}

Theoretical work on the expressivity of transformers (cf.~the survey by \citealp{strobl-etal-2024-survey}) can be seen as a prelude to the study of learnability by transformers, since the existence of a transformer and parameters that solve a given problem is a prerequisite for a transformer to learn to solve the problem. But the existence of a single point in parameter space does not tell us much about whether that point is actually learnable. In this paper, following \citet{kover2026geometry}, we ask a more difficult question: given that a problem is solvable by a transformer with some parameters, how large is the set of successful parameters? While \citet{kover2026geometry} arrived at somewhat pessimistic conclusions, we look at single-layer transformers with \emph{$\log n$-scaling} and find that, the volume of the set of successful parameters, as a function of the sequence length $n$, provides a way to quantify the difficulty of length-generalization in various problems. 

The drive for looking beyond expressivity is also visible in experiments on simple sequence tasks. \citet{chiang2022overcoming} construct exact transformers for \FIRST{} (whether the first symbol is a $\texttt{1}$) and \PARITY{} (whether the number of $\texttt{1}$'s is odd), but report that standard training fails on \PARITY, and that \FIRST{} does not necessarily extrapolate to longer inputs without $\log n$-scaling. \Citet{hahn2024sensitive} report that fitted \PARITY{} solutions become much sharper as input length grows than fitted solutions for \FIRST{} and \MAJORITY{} (whether there are more $\texttt{1}$'s than $\texttt{0}$'s). Thus, tasks that are all expressible can behave differently under training and length extrapolation.

Such findings motivate examining the geometry of the parameter sets that implement a task. In related work on loss landscapes, \citet{chiang2023loss} argue that the geometry and volume of low-loss regions can explain part of neural-network generalization without relying entirely on the implicit bias of gradient descent. We study a related exact-solution quantity at each finite input length. For a fixed task and architecture, we restrict the trainable parameters to a bounded region and measure the fraction that correctly solves the task on every input of length $n$. We call this quantity the \emph{normalized exact-solution volume (NESV)}. Its dependence on $n$ describes how the full set of exact solutions changes as the computation is required to handle longer inputs. With uniform sampling over the bounded parameter region, this is the probability of sampling an exact solution. Comparing this baseline with empirical training results can help distinguish the effects of initialization and optimization from the nature of the task and the selected transformer model in terms of learnability.

We consider four binary classification tasks on binary strings. \FIRST\ returns the first input symbol, whereas \INDEX\ returns the symbol at a position specified by one of the input tokens. \MAJORITY\ returns whether more than half of input symbols are 1's, whereas \PARITY\ returns whether the number of $\texttt{1}$'s is odd. We refer to the first pair as \emph{positional} tasks, and the second pair as \emph{counting} tasks. 

We conduct our analyses on the simplest models suited to solve the corresponding problems. In all cases, we only consider single-layer transformers. For the positional tasks, we use a model with positional encoding ($j$, $j^2$) where the Query and Key maps only depend on token positions and the Value map only depends on the tokens themselves (following the form of the construction of \citet{bhattamishra2024separations}). %
For the counting tasks, the positions of the tokens are irrelevant, so we primarily use a model without positional encoding. We also examined a model with sinusoidal positional encoding ($\sin(j)$, $\cos(j)$).

See \cref{tab:summary} for a summary of the asymptotic bounds obtained with respect to input length $n$. 
Under the positional model $\Apos$, the NESV for \INDEX{} is $\Theta(n^{-3})$, while \FIRST\ has constant NESV. The sinusoidal model $\Asin$ has volume $0$ for both. Under the position-free model $\Acount$, the NESV for \MAJORITY{} is $\Theta(1/(n\log n))$. Under the sinusoidal model $\Asin$, the NESV for \MAJORITY{} is instead $\Theta(1/({n^3\log n}))$.
\PARITY{} has a zero NESV whenever $n>h+1$ for both models, $h$ being the width of the transformer. This is a direct consequence of the standard one-dimensional capacity of shallow ReLU networks \citep{arora2018understanding} in the finite-length setting, and is consistent with the broader result that no one-layer, one-head transformer computes \PARITY{} across all lengths \citep{kozachinskiy2026parity}.

We train those tasks under the same protocol. The empirical results are in line with our NESV analysis: the task-model pair whose NESV shrinks faster as $n$ grows shows worse length generalization. The only counterexample was that, under the general sinusoidal model, MAJORITY trains as well as under the position-free model $\Acount$, although the former's volume shrinks faster with $n$.

\begin{table}
\caption{NESV bounds and mean accuracy at $10\times$ the maximum training length for the four tasks. 
The models $\Acount$, $\Asin$, and $\Apos$ are described in \cref{sec:models}, while $\Anorm$ is described in \cref{sec:index-norm}.}
\medskip
\label{tab:summary}
\centering
\small
\setlength{\tabcolsep}{0pt}
\begin{tabular}{@{}lc@{~}r@{\hspace{8pt}}cr@{\hspace{8pt}}cr@{\hspace{8pt}}cr@{}}
\toprule
Task & \multicolumn{8}{c}{Model} \\
& \multicolumn{2}{c}{Positional ($\Apos$)}
& \multicolumn{2}{c}{General ($\Acount$)}
& \multicolumn{2}{c}{Sinusoidal ($\Asin$)}
& \multicolumn{2}{c}{Normalized ($\Anorm$)}
\\
\cmidrule(lr){2-3}
\cmidrule(lr){4-5}
\cmidrule(lr){6-7}
\cmidrule(l){8-9}
& NESV & Acc. (\%)
& NESV & Acc. (\%)
& NESV & Acc. (\%)
& NESV & Acc. (\%)
\\
\midrule
\FIRST
&
$\Theta(1)$&
100.0&
$0$&
51.12&
$0$&
55.06&
$\Theta(1)$&
100.0\\

\MAJORITY
&
---&
92.7&
$\Theta((n\log n)^{-1})$&
82.6&
 $\Theta(({n^3\log n})^{-1})$&
87.5&
---&
92.7\\

\INDEX
&
$\Theta(n^{-3})$&
59.6&
$0$&
51.9&
$0$&
50.2&
$\Theta(n^{-1})$&
85.1\\

\PARITY
&
$0$&
---&
$0$&
---&
$0$&
---&
$0$&
---\\
\bottomrule
\end{tabular}
\end{table}

Among the tasks we examine, \INDEX{} demonstrates particularly bad length generalization (60\% average accuracy at ten times training length) despite being expressible \citep{kozachinskiy2026indexing}. Our NESV analysis reveals the underlying mechanisms. Exact retrieval of a designated position $k$ requires that position to receive enough attention, so its attention score must remain within a constant of the maximum attention score.  Expressing the difference in attention scores between neighboring positions as a function of~$k$ reveals two key error components, one proportional to~$k^2$ and another proportional to~$k$, which are the sources of failing length generalization. 

Guided by this analysis, we modify the model so that the error term proportional to $k^2$ no longer grows with input length. This is called the normalized positional model. The NESV improves from $\Theta(n^{-3})$ to $\Theta(n^{-1})$. In our experiments, the modified model also generalizes substantially better on \INDEX{}, achieving an average accuracy of 85\% percent at ten times the training length.

Taken together, the results show that expressivity alone can conceal qualitatively different parameter-space behavior. Volume analysis, on the other hand, may offer another useful tool to further identify the computational constraints responsible for learning behaviors and offer insight for task-specific model refinements.

\section{Setup}
\label{sec: setup}
In this section, we describe the tasks to be analyzed and the transformers to be used in the analysis.

\subsection{Tasks and exact-solution volume}

For $n\ge1$, write $[n]=\{1,\ldots,n\}$ and let $w=(w_1,\ldots,w_n)\in\{0,1\}^n$. We study
\begin{align*}
\FIRST_n(w)&=w_1 \\
\INDEX_n(w,k)&=w_k,\quad k\in[n],\\
\MAJORITY_n(w)&=\1\mleft\{\sum_{j=1}^n w_j\ge\frac{n+1}{2}\mright\}
\quad\text{for odd }n,\\
\PARITY_n(w)&=\left(\sum_{j=1}^n w_j\right)\bmod 2.
\end{align*}
For \INDEX, the $n$ tokens $w_1, \ldots, w_n$, which we call \emph{data} tokens, are followed by a single token, which we call the \emph{index} token, encoding $k$. We refer to this as \emph{right-hand sequence \INDEX{}ing}. For the other tasks, there is an index token for consistency's sake, but it is fixed.

\begin{definition}[Normalized exact-solution volume, NESV]\label{def:volume}
For a task $T$ and transformer model $\mathcal A$, let $S_n^{T,\mathcal A}$ be the parameters for which the output is strictly correct on every length-$n$ input, and on every index $k$. For a fixed parameter bound $B>0$, define
\begin{equation}
\Vol^{T,\mathcal A}(n)
=
\frac{\lambda_{D_{\mathcal A}}\mleft(S_n^{T,\mathcal A}\cap[-B,B]^{D_{\mathcal A}}\mright)}{(2B)^{D_{\mathcal A}}}.
\label{eq:volume}
\end{equation}
\end{definition}
Equivalently, \eqref{eq:volume} is the probability of obtaining a correct solution when each scalar parameter is sampled independently and uniformly from \([-B,B]\). Unless stated otherwise, $B$ and the hidden width of the FFNN, $h$, are fixed as $n$ grows. Throughout, $\log$ denotes the natural logarithm; $\|\mathord\cdot\|_2$ and $\|\mathord\cdot\|_\infty$ denote the Euclidean and maximum-coordinate norms. Throughout, $c,C>0$ and the constants implicit in the subscripted asymptotic notation such as $\Theta_{B,h}$ may depend on $B$ and $h$, but never on $n$; $c$, $C$ may change from line to line. 

\subsection{Transformer Models}
\label{sec:models}

We consider only single-layer transformers.
Let \(x_j\) denote the \(j\)-th data token and
let \(z\) denote the index token. The query, key, and value maps have the form $Q(x)=W_Qx+b_Q,
K(x)=W_Kx+b_K,
V(x)=W_Vx+b_V.$ Writing $q=Q(z),  k_j=K(x_j), v_j=V(x_j)$,  the attention output is
\begin{equation}
\alpha_j
=
\frac{\exp\!\left((\log n)q^\top k_j\right)}
{\sum_{i=1}^n\exp\!\left((\log n)q^\top k_i\right)},
\qquad
A
=
\sum_{j=1}^n\alpha_j u_j.
\label{eq:attention}
\end{equation}
The $\log n$ factor was proposed by \citet{chiang2022overcoming} and \citet{nakanishi2025scalablesoftmaxsuperiorattention} and has been used in practice in Qwen \citep{bai2023qwentechnicalreport}. \Citet{chen-etal-2026-critical} provide additional theoretical justification for this choice.  In this paper, this choice is natural since we study positional tasks such as \INDEX{}, where attention must continue to select a particular position as the number of competing positions grows with the input length. We omit the \(1/\sqrt d\) factor in conventional $\log n$ scaling because \(d\) is constant in \(n\) and does not affect the asymptotic bounds in $n$.

The self-attention is followed by a feed-forward network (FFNN) classifier with hidden width $h$:
\begin{equation}
y
=
W_2\relu(W_1A+b_1)+b_2
\in\R^2,
\label{eq:output}
\end{equation}
where $W_1\in\R^{h\times2},
b_1\in\R^h,
W_2\in\R^{2\times h},
b_2\in\R^2$.

The two coordinates of $y$ are indexed by the labels \(\{0,1\}\), and the predicted label is $\widehat y
=
\operatorname*{argmax}_{c\in\{0,1\}}(y)_c$. 
When the input needs to be explicit, we write \(A(w,k)\) and \(y(w,k)\). 

We do not use residual connections or layer normalization. %
Adding them could alter the size and geometry of the exact-solution set; %
we leave this extension for future work.

In this paper, we consider four models. We describe three of them here; the fourth, $\Anorm$, we leave for \cref{sec:index-norm}.

In the \textbf{general} model $\Acount(h)$, the data tokens are $x_j=e_{w_j}\in\R^2$, and the index token is $z=0_2$. The maps $Q,K,V:\R^2\to\R^2$ are unrestricted affine
maps, so $q=Q(0_2)$, $k_j=K(e_{w_j})$, and
$v_j=V(e_{w_j})$. So $\Acount(h)$ has $D_{\mathrm{count}}=5h+20$ trainable scalar parameters. 

We also examined a model, $\Asin(h)$, with a \textbf{sinusoidal} positional encoding ($\sin(j)$, $\cos(j)$).

For the positional tasks, we use the \textbf{positional model} \(\Apos(h)\). The data token at position \(j\) and the index token
encoding \(k\) are
\[
x_j^{\mathrm{pos}}
=
\begin{bmatrix}
e_{w_j}\\
j\\
j^2
\end{bmatrix}
\in\R^4,
\qquad
z_k^{\mathrm{pos}}
=
\begin{bmatrix}
0_2\\
k\\
0
\end{bmatrix}
\in\R^4
\]
(where $0_2$ is the two-dimensional zero vector).
For \FIRST, the index is fixed at \(k=1\). For \INDEX, the index \(k\in[n]\) is supplied as part of the input. 
Because \(j\) and \(j^2\) grow without bound in the input length, allowing them to propagate through the network would like lead to instability;
positional-encoding choices are known to strongly affect length extrapolation \citep{kazemnejad2023impact}. 
Accordingly, we constrain the model to have the form of the index-lookup construction of \citet{bhattamishra2024separations}: %
\begin{align*}
W_Q^{\mathrm{pos}}
&=
\begin{bmatrix}
0_{2\times2} & a & 0_2
\end{bmatrix},
&
b_Q^{\mathrm{pos}}&=b, \\
W_K^{\mathrm{pos}}
&=
\begin{bmatrix}
0_{2\times2} & r & s
\end{bmatrix},
&
b_K^{\mathrm{pos}}&=c_K,
\\
W_V^{\mathrm{pos}}
&=
\begin{bmatrix}
C_V & 0_{2\times2}
\end{bmatrix},
&
b_V^{\mathrm{pos}}&=b_V,
\end{align*}
where $a,b,r,s,c_K,b_V\in\R^2,
C_V\in\R^{2\times2}$ (and $0_{2\times2}$ is the $2 \times 2$ zero matrix).
This model has $D_{\mathrm{pos}}=5h+18$ trainable scalar parameters.

\section{NESV bounds and length generalization}
\label{sec:comparison}

We first establish the volume bounds for the four tasks and then compare such results with those of length-generalization experiments. 

\subsection{Main result}

\begin{theorem}[NESV bounds]\label{thm:nesv-bounds}
Fix $B>0$ and $h\ge2$. As $n\to\infty$,
\begin{align}
\Vol^{\FIRST,\Apos(h)}(n)&=\Theta_{B,h}(1)\label{eq:first-main}\\
\Vol^{\INDEX,\Apos(h)}(n)&=\Theta_{B,h}(n^{-3})\label{eq:index-sharp}\\
\Vol^{\MAJORITY,\Acount(h)}(n)&=\Theta_{B,h}\bigl((n\log n)^{-1}\bigr)
&&\text{$n$ odd} \label{eq:maj-sharp} \\
\Vol^{\PARITY,\Acount(h)}(n)&=0 && n>h+1.
\label{eq:parity-zero}
\end{align}
\end{theorem}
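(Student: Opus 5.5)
We treat the four claims separately, since each reduces to a different low-dimensional geometric question. For each task we first reduce correctness of the full network to conditions on a few scalar statistics of the attention layer, then integrate over the remaining parameters. The FFNN parameters $(W_1,b_1,W_2,b_2)$ enter only through the induced one-dimensional map from the attention output to the logit difference $y_1-y_0$. We will repeatedly use that a positive-measure set of FFNN parameters separates any two prescribed compact, well-separated clusters of attention outputs. Conversely, a continuous piecewise-linear map with at most $h$ breakpoints has at most $h+1$ monotone pieces along any line.

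\textbf{Parity.} In $\Acount$ the keys and values depend only on the token, so all $k_j$ with $w_j=1$ coincide, and likewise for $w_j=0$. Hence with $m=\sum_j w_j$,
\[
A = \frac{m e^{(\log n)\beta_1}u_1+(n-m)e^{(\log n)\beta_0}u_0}{m e^{(\log n)\beta_1}+(n-m)e^{(\log n)\beta_0}},
\]
so $A$ lies on the segment $[u_0,u_1]$ and is monotone in $m$. Along this segment, $y_1-y_0$ is piecewise linear with at most $h$ kinks and thus changes sign at most $h$ times. Meanwhile $\PARITY$ requires $n$ sign changes as $m$ runs over $0,\dots,n$. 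If $u_0=u_1$, all $A$ coincide, which is also a contradiction. Therefore $S_n=\emptyset$ for $n>h+1$.

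\textbf{Majority.} The same reduction gives $A(m)=u_0+t(m)(u_1-u_0)$ with $t(m)=m\rho/(m\rho+n-m)$ and $\rho=n^{\Delta}$, where $\Delta=q^\top(k_1-k_0)$. Exactness requires the FFNN's decision threshold on the segment to fall strictly between $t((n-1)/2)$ and $t((n+1)/2)$, with correct orientation on the $n+1$ points. Near $m\approx n/2$ the gap is $t'\approx \rho/(n(1+\rho)^2)$, so the allowed threshold window has length about $\rho/(n(1+\rho)^2)$. For the upper bound, integrate this window length over the remaining parameters. The threshold position in $t$ has bounded density (a conditioning argument on $b_1$ or $b_2$), so the volume is $\lesssim n^{-1}\,\mathbb E[\min(\rho,\rho^{-1})]$. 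Since $\Delta$ has a bounded density near $0$, $\mathbb E[n^{-|\Delta|}]=\Theta(1/\log n)$, giving $O(1/(n\log n))$. For the matching lower bound, restrict to $|\Delta|\le 1/\log n$ (probability $\asymp 1/\log n$), FFNN parameters in a fixed positive-measure set that makes a single monotone sign change on the segment with $|u_1-u_0|$ bounded below, and the threshold in the window, which costs a factor $\asymp 1/n$.

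\textbf{First and Index.} In $\Apos$ the attention score is $(\log n)(\alpha j+\gamma j^2+\delta_k)$, where $\alpha=a^\top r\,k+b^\top r$ and $\gamma=a^\top s\,k+b^\top s$; a term constant in $j$ cancels. Values depend only on $w_j$, so $A$ again lies on $[u_0,u_1]$ with coordinate equal to the attention mass $p$ on ones. For $\FIRST$, correctness is guaranteed on a positive-volume set: when $\alpha,\gamma<-c$, position $1$ dominates with $p\in\{\le n^{-c'},\ \ge 1-n^{-c'}\}$, and any FFNN separating neighborhoods of $0$ and $1$ works. The set is bounded by a constant trivially, giving $\Theta(1)$. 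For $\INDEX$, the score $f_k(j)$ must be maximized at $j=k$ up to $O(1/\log n)$ relative to competitors, uniformly for all $k\le n$. Writing $f_k(k\pm1)-f_k(k)$ in terms of the parameters gives error terms of order $k^2(a^\top s)$-type and $k$-type. Requiring all $k$ to work forces $a^\top s+2 \cdot(\text{peak condition})$ to sit in a window of width $\asymp n^{-2}$ for the quadratic-coefficient relation, and the linear coefficients to sit in a window of width $\asymp n^{-1}$. The lower bound constructs a box of those widths around the Bhattamishra construction, with $b^\top s=-\,b^\top r/(2\cdot)$ matched so that the vertex is at $k$. This gives $\Theta(n^{-3})$, since the two constraints are transversal linear-in-products conditions whose Jacobian is bounded on a positive-measure set.

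\textbf{Main obstacle.} We expect the main obstacle to be this Index step, in two respects. The upper bound must show that no other region of parameter space (e.g., with $\gamma>0$ or degenerate products $a^\top s$) contributes more. The lower bound requires a careful choice of the coordinates in which the conditions become two thin slabs of widths $n^{-2}$ and $n^{-1}$. We would first try to parametrize by $(a^\top s,\, b^\top s,\, a^\top r,\, b^\top r)$ and use a change of variables with bounded Jacobian away from a null set.
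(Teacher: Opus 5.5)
Your treatments of \FIRST, \MAJORITY, and \PARITY{} are essentially the paper's. You reduce the attention output to a point on the segment $[u_0,u_1]$. For majority you bound the admissible window by slicing along the output bias (a Lipschitz constant times $\rho_{m+1}-\rho_m$) and average $\min\{\rho,\rho^{-1}\}=n^{-|X|}$, $X=q^\top(k_1-k_0)$, against a bounded density. For parity you count kinks along the segment. One slip there: a continuous piecewise-linear function with $h$ kinks has $h+1$ affine pieces and can change sign $h+1$ times, not $h$; that is exactly why the threshold is $n>h+1$.

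The genuine gap is the \INDEX{} upper bound. You flag it as the main obstacle but do not prove it, and your sketch fails at three points.

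First, exact correctness gives no quantitative lower bound on $\alpha_k$. So it does not force $g_k$ to lie within $O(1/\log n)$, or any fixed amount, of $\max_j g_j$. Flipping $w_k$ in a constant word changes $y_\sigma-y_\tau$ by at most $8hB^3\alpha_k$, but that difference only has to cross zero, so a tiny $\alpha_k$ is compatible with a tiny margin. The paper first discards solutions in which one of the $4n$ relevant margins lies in $(0,n^{-5}]$; slicing along $b_{2,w_k}$ shows this costs only $O(n^{-4})$. Only for the remaining solutions does it get $\alpha_k\gtrsim n^{-5}$, hence $\max_j g_j-g_k\le M_0=O(1)$.

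Second, comparing $k$ with its two neighbors then yields only $C_{\mathrm{quad}}(k)+|C_{\mathrm{lin}}(k)|\le M_0$. Here $C_{\mathrm{lin}}(k)=2\chi k^2+\ell k+\phi$ and $C_{\mathrm{quad}}(k)=\chi k+\delta$, with $\chi=a^\top s$, $\ell=2b^\top s+a^\top r$, $\phi=b^\top r$, $\delta=b^\top s$. The two thin directions are $\chi$ and $\ell$. Your ``$b^\top s=-b^\top r/(2\cdot)$'' is not the right relation, and $\phi$ need only be $O(1)$. Because $C_{\mathrm{quad}}(k)$ can a priori be as negative as $-\Theta(n)$, the inequality does not bound $C_{\mathrm{lin}}$ directly. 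You need second differences of $C_{\mathrm{lin}}$ at $k=M,2M,3M$ with $M=\lfloor n/4\rfloor$, applied twice. The first round gives $|\chi|=O(n^{-1})$ and $|\ell|=O(1)$, which makes $C_{\mathrm{quad}}=O(1)$. The second round then gives $|\chi|=O(n^{-2})$ and $|\ell|=O(n^{-1})$.

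Third, turning these constraints into volume requires $\Pr(|a^\top s|\le\varepsilon_1,\ |2b^\top s+a^\top r|\le\varepsilon_2)=O(\varepsilon_1\varepsilon_2)$ for $\varepsilon_1\le\varepsilon_2$. A change of variables with Jacobian bounded away from a null set does not give this by itself. Both $\nabla\chi$ and the $(b,r)$-component of $\nabla\ell$ have size $\asymp\varrho=\|(a,s)\|_2$, so the Jacobian of $(a,b,r,s)\mapsto(\chi,\ell)$ degenerates as $(a,s)\to0$. That is exactly where both constraints are least restrictive. Conditionally on $(a,s)$, the $\ell$-slab has relative width $\lesssim\min\{1,\varepsilon_2/\varrho\}$, and at radius $\varrho$ the $\chi$-event has angular probability $\lesssim\min\{1,\varepsilon_1/\varrho^2\}$. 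The paper's joint two-slab lemma checks that $\int_0^{R}\varrho^3\min\{1,\varepsilon_1/\varrho^2\}\min\{1,\varepsilon_2/\varrho\}\,d\varrho=O(\varepsilon_1\varepsilon_2)$, so the region near $a=s=0$ does not dominate. Adding the $O(n^{-4})$ small-margin part then gives $O(n^{-3})$.

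Your \INDEX{} lower-bound sketch is fine once you write it in the coordinates $(\chi,\ell,a,b,r_2,s_2)$, whose Jacobian $a_1^2$ is bounded away from zero near the center. You also need to check that $C_{\mathrm{quad}}(k)\le-c$ and $|C_{\mathrm{lin}}(k)|\le c/2$ hold uniformly for $k\le n$ on the box.
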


\begin{proof}[Proof sketch]
For \FIRST, one can choose parameters such that the attention score decreases quadratically away from position one. A fixed neighborhood of these parameters preserves that decrease, and the resulting concentration of attention lets the classifier return the first symbol for all sufficiently large lengths. The neighborhood does not shrink with $n$, thus ensuring a constant NESV bound. 

However, the same argument cannot be used for \INDEX, because the attention score must peak at $j=k$ for every possible $k$, and small errors that are harmless at one position can be amplified at another. We examine the two relevant error sources in \cref{sec:index}.

In the position-free model $\Acount$, a word is represented only by its number of 1's. For \MAJORITY, the output changes only once,
between counts \((n-1)/2\) and \((n+1)/2\). These counts differ by just one, so the interval that separates them narrows as $n$ increases. Consequently, placing the decision boundary between them requires increasingly precise parameter choice. 

For \PARITY, by contrast, every increment in the count must flip the output. A fixed number of hidden ReLU units cannot provide arbitrarily many flips in output. The upper bound of zero, meaning that \PARITY{} is not solvable by the class of transformers considered here, is consistent with previous theoretical results \citep{kozachinskiy2026parity} and experiments \citep{chiang2022overcoming,hahn2024sensitive}.

The complete proofs of the four bounds, including the technical details omitted from the \INDEX\ proof sketch in \cref{sec:index-components}, are given in \cref{app:baseline-proofs}. 
\end{proof}
For detailed results related to the  sinusoidal model $\Asin$, see \cref{app:sinusoidal-majority}.

\subsection{Experiment}

To compare these bounds with empirical results, we train the \FIRST, \INDEX, and \MAJORITY\ models with hidden width $h=8$. We use 20 random seeds per task and draw a fresh batch at each update, sampling its length uniformly from $1,\ldots,N$; for \MAJORITY, we use only odd lengths. The main results use $N=51$. Checkpoints are selected using validation data at length $N$, without inspecting longer inputs, and tested on random examples at lengths up to 511. Additional experiments on $N=11,21,31,41$, which give similar results, are in \cref{app:experiments}.

\Cref{tab:summary} shows that \FIRST\ remains accurate at the longest test length (under $\Apos$), \MAJORITY\ retains substantial accuracy (under $\Acount$), and \INDEX\ falls much closer to chance (under $\Apos$). These observations agree with the ranking of the three volume bounds. 

We found the results of \INDEX{} particularly interesting. Despite the task being expressible and learnable on lengths up to 51, it showed poor length generalization, much worse than that of \MAJORITY{}.  We take a closer look at this task in the next section.

\section{The \INDEX\ task}
\label{sec:index}

We first revisit the volume argument from the previous section, to identify the parameter combinations responsible for the $n$-dependent decay. We then modify the model to
control one of these combinations structurally and compare the
resulting theoretical and empirical behavior.

\subsection{Length-sensitive components of the attention score}
\label{sec:index-components}

Recall that, in the positional model $\Apos(h)$,  $q=ak+b, 
k_j=rj+sj^2+c_K, 
u_j=C_Ve_{w_j}+b_V.
$
The attention score is
\begin{equation}
g_j=q^\top k_j = (ak+b)^\top(rj+sj^2+c_K).
\label{eq:index-score}
\end{equation}
We compare the scores at position $k$ and position $k+\Delta$. The expansion will produce the following quantities used repeatedly in later proofs, so we introduce some notations for future convenience.
\begin{equation}
\chi=a^\top s
\qquad
\ell=2b^\top s+a^\top r
\qquad
\delta=b^\top s
\qquad
\phi=b^\top r.
\label{eq:index-effective-coefficients}
\end{equation}
Now consider position $k+\Delta \in [n]$. The \emph{score gap} of this position is
\begin{equation}
\begin{aligned}
g_{k+\Delta}-g_k
&=
\Delta
\underbrace{
  \left(2\chi k^2+\ell k+\phi\right)
}_{C_{\mathrm{lin}}(k)}
+
\Delta^2
\underbrace{
  \left(\chi k+\delta\right)
}_{C_{\mathrm{quad}}(k)}.
\end{aligned}
\label{eq:index-score-displacement}
\end{equation}
We want this gap to be negative. The coefficient $C_{\mathrm{quad}}(k)$ controls the curvature of the score around the indexed position, while $C_{\mathrm{lin}}(k)$ measures its first-order drift and therefore whether the score remains centered at $j=k$; we want it to be zero. Because $\chi$ and $\ell$ are multiplied by $k$ and $k^2$, small errors in these parameter combinations can be amplified at large $k$ (i.e., at large $n$).

\subsubsection{Sketch of lower bound}

See \cref{sec:index_lower} for the full proof of the lower bound.

Let $\eta=\min\{B/16,1/16\}$ and consider the solution (which we call the \emph{center}):
\begin{equation}
a^0=(2\eta,0),
\qquad
b^0=(0,-\eta),
\qquad
r^0=(\eta,0),
\qquad
s^0=(0,\eta).
\label{eq:index-routing-center}
\end{equation}
We want to find a region around the center that still solves the problem. The volume of this region will serve as our lower bound. At this center, $\chi=\ell=\phi=0$, $\delta=-\eta^2$, and the score gap is 
\begin{equation}
g_{k+\Delta}-g_k=-\eta^2\Delta^2.
\label{eq:index-center-gap}
\end{equation}
In a sufficiently small neighborhood of the center, we may restrict
\begin{equation}
|\chi|\le c n^{-2},
\qquad
|\ell|\le c n^{-1},
\label{eq:index-lower-widths}
\end{equation}
while allowing $a$, $b$, $r_2$, and $s_2$ (the second coordinates of $r$ and $s$, resp.) to vary over fixed-width intervals independent of $n$. Every parameter choice in this region then satisfies $g_{k+\Delta}-g_k
\le
-\frac{\eta^2}{4}\Delta^2$ for every $k\in[n]$ and every nonzero $\Delta$ such that $k+\Delta\in[n]$. Under the $\log n$ attention scale, the total attention outside the indexed position then tends to zero as $n \to \infty$ uniformly in $k$.  Hence, for all sufficiently large $n$, the indexed position receives at least a fixed fraction of the total attention, uniformly over all inputs and indexes. We may then restrict $(C_V,b_V)$ and $(W_1,b_1,W_2,b_2)$ to fixed-size neighborhoods, independent of $n$, such that, for some constant $c_0>0$ independent of $n$, $y_{w_k}(w,k)-y_{1-w_k}(w,k)\ge c_0$ for every input $(w,k)$. To produce a exact solution, only the width of the interval $\chi$ and $\ell$ need to depend on $n$. Therefore the constructed set of exact solutions has normalized volume
\begin{equation}
\Omega(n^{-2})\cdot\Omega(n^{-1})
=
\Omega(n^{-3}).
\label{eq:index-lower-volume-sketch}
\end{equation}

\subsubsection{Sketch of upper bound}

See \cref{sec:index_upper} for the full proof of the upper bound.

For each indexed position $k$, compare a constant word with a word that differs from it only at position $k$. These two inputs must be assigned different labels. 
We consider two cases. Let $m = y_{w_k}(w,k)-y_{1-w_k}(w,k)$. In any correct solution, we will have $m > 0$,  but if $m \le n^{-5}$, we call the solution \emph{small-margin}; otherwise, we call it \emph{large-margin}.

Take any one of the inequalities $y_{w_k}(w,k)>y_{1-w_k}(w,k)$; there are $4n$ such inequalities in total.  Fix arbitrary values of $a,b,r,s,c_K,C_V,b_V,W_1,b_1,W_2$, and $b_{2,1-w_k}$. As $b_{2,w_k}$ varies, the margin of the inequality varies with slope one. To satisfy $0<y_{w_k}(w,k)-y_{1-w_k}(w,k)\le n^{-5}$, $b_{2,w_k}$ needs to lie in an interval of length $n^{-5}$. Thus the set of small-margin solutions has normalized volume at most $4n\cdot n^{-5}/(2B)=O(n^{-4})$.

For every large-margin solution, changing only the symbol at position $k$ from $\tau$ to $\sigma\ne\tau$ must change the predicted label from $\tau$ to $\sigma$. By the definition of a large-margin solution, for each of the two words
$w$ in this comparison, $y_{w_k}(w,k)-y_{1-w_k}(w,k)>n^{-5}$. Since the $y_{\sigma}(w,k)-y_{\tau}(w,k)$,
viewed as a function of the attention output $A(w,k)$ is Lipschitz, this is impossible if the attention assigned to position $k$ is too small. Hence the indexed position receives at least an inverse polynomial amount of attention. The softmax formula in \eqref{eq:attention} then implies that $g_k$ lies within a constant of the maximum score. Comparing it with the two neighboring scores in \eqref{eq:index-score-displacement} gives $C_{\mathrm{quad}}(k)+|C_{\mathrm{lin}}(k)|\le M_0 ,$ where $M_0$ is independent of $n$. Since every coordinate of $a,b,s$ lies in $[-B,B]$, both
$\chi=a^\top s$ and $\delta=b^\top s$ are bounded independently of $n$. Hence, for $k\le n$, $|C_{\mathrm{quad}}(k)| = |\chi k+\delta| = O(n)$, and thus $|C_{\mathrm{lin}}(k)| \le M_0+|C_{\mathrm{quad}}(k)| = O(n).$ Let $M=\lfloor n/4\rfloor$. Since $C_{\mathrm{lin}}(k)=2\chi k^2+\ell k+\phi$, we have $C_{\mathrm{lin}}(3M)-2C_{\mathrm{lin}}(2M)+C_{\mathrm{lin}}(M)=4\chi M^2$. Because each term on the left is $O(n)$ and $M=\Theta(n)$, this gives $|\chi|=O(n^{-1})$. Moreover, $C_{\mathrm{lin}}(2M)-C_{\mathrm{lin}}(M)=6\chi M^2+\ell M$. Using $|\chi|=O(n^{-1})$ in this identity gives $|\ell|=O(1)$.
Repeating the same argument gives
\begin{equation}
|\chi|=O(n^{-2}),
\qquad
|\ell|=O(n^{-1}).
\label{eq:index-sharp-coefficient-constraints}
\end{equation}
We used the fact 
\begin{equation}
    \Pr\left(
 |a^\top s|\le\varepsilon_1,
 \ |2b^\top s+a^\top r|\le\varepsilon_2
\right)=O_B(\varepsilon_1\varepsilon_2)
\end{equation}
whenever $0<\varepsilon_1\le\varepsilon_2$ are sufficiently small (for detailed proof see \cref{sec:index_upper}, \cref{lem:index-joint-slab}). Apply this fact to \eqref{eq:index-sharp-coefficient-constraints} and adding the $O(n^{-4})$ small-margin volume would gives the upper bound $O(n^{-3})$. 

\subsubsection{Experimental analysis}

We next check whether $\chi=a^\top s  \text{ and }  \ell=2b^\top s+a^\top r$  identified by the volume proof track the behavior of trained models. In \eqref{eq:index-score-displacement}, $\chi$ controls the error term whose effect grows quadratically with $k$, while $\ell$ controls the error term whose effect grows linearly with $k$. For a model trained up to maximum length $N$, we therefore consider the quantities $N^2|\chi|$ and $N|\ell|$, allowing models trained at different maximum lengths to be comparable. Across the 100 positional models trained at five maximum lengths, smaller values of both quantities are associated with higher
mean accuracy on unseen lengths; see \cref{fig:index-coefficients}. Ranking the models separately within each maximum training length gives correlations of $-0.655$ for $N^2|\chi|$ and $-0.516$ for $N|\ell|$. Both have $p<10^{-3}$ under permutation tests that preserve the five training-length groups. The training and evaluation protocol is given in \cref{app:training-protocol,app:evaluation-banks}, and the coefficient analysis is detailed in \cref{app:index-coefficient-experiments}.

\begin{figure}
\centering
\vspace*{-2ex}
\includegraphics[scale=0.5]{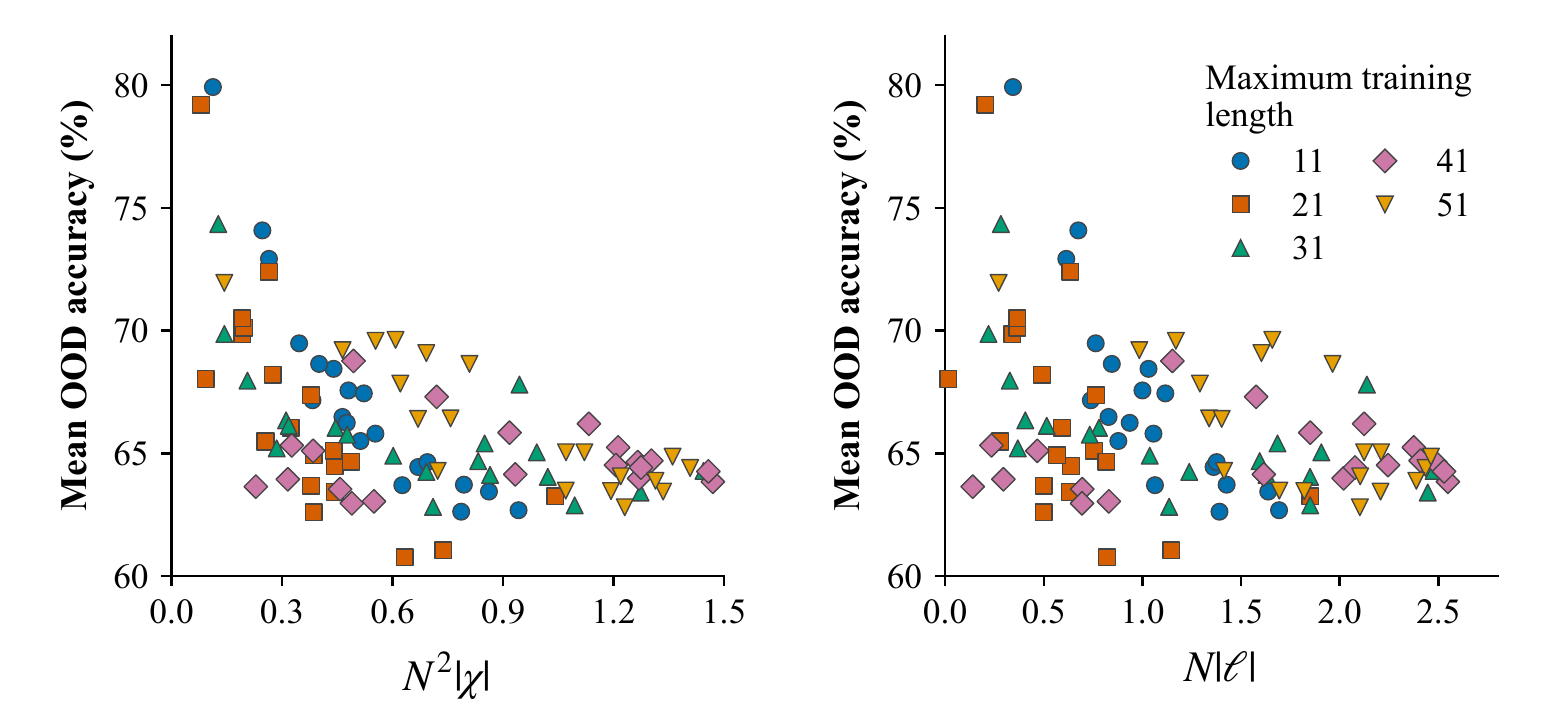}
\vspace*{-2ex}
\caption{Scale-normalized drift coefficients and mean OOD accuracy for trained positional \INDEX\ models. Markers indicate the maximum training length $N$.}
\label{fig:index-coefficients}
\end{figure}

The above analysis suggests a direct architectural change: instead of relying on training to place two free coefficients inside shrinking intervals, control one of the length-growing error terms through the model itself.

\subsection{The Normalized Model}
\label{sec:index-norm}

We introduce a modified positional model, denoted $\Anorm(h)$. It uses the same input vectors, value map, ReLU classifier, and trainable positional parameters $a,b,r,s,c_K$ as $\Apos(h)$. 
We split the computation of the attention score into two \emph{channels}:  a linear channel that does not depend on the $j^2$ coordinate of the keys' positional encoding, and a quadratic channel that depends only on the $j^2$ coordinate.
In the quadratic channel, we modify the queries as follows. Define the fixed smooth map
\begin{equation}
F(v)=\frac{v}{\sqrt{1+\|v\|_2^2}}
\qquad (v\in\mathbb R^2).
\label{eq:norm-map}
\end{equation}
The linear and quadratic positional channels are
\begin{equation}
q^{\mathrm{lin}}=ak+b,
\qquad
q^{\mathrm{quad}}=b+F(ak),
\qquad
k_j^{\mathrm{lin}}=rj+c_K,
\qquad
k_j^{\mathrm{quad}}=sj^2.
\label{eq:norm-channels}
\end{equation}
Thus the linear channel uses the full query $ak+b$, whereas the quadratic channel applies $F$ to the index-dependent contribution $ak$. The attention score is
\begin{equation}
\begin{aligned}
g_j^{\mathrm{norm}}
&=
\bigl(q^{\mathrm{lin}}\bigr)^\top k_j^{\mathrm{lin}}
+
\bigl(q^{\mathrm{quad}}\bigr)^\top k_j^{\mathrm{quad}}
&=
(ak+b)^\top(rj+c_K)
+
\bigl(b+F(ak)\bigr)^\top sj^2.
\end{aligned}
\label{eq:norm-score}
\end{equation}
The attention weights are obtained by applying the same $\log n$-scaled softmax to \eqref{eq:norm-score}; the values and the one-hidden-layer ReLU classifier are unchanged. The normalization introduces no new trainable parameters, so $\Anorm(h)$ has the same parameter dimension and uses the same $B$ as $\Apos(h)$.

This modification could be applied to a general transformer by excluding $j^2$ from the position encoding at position $j$ but adding an additional term $f(q_i,k_j)j^2$ directly into the attention score, with $f$ being a scalar-valued function whose output remains bounded. We leave exploration of this more general setting for future work.

The map $F$ is approximately the identity near the origin, 
but its output has norm strictly smaller than one. Consequently, the component of the query that depends on $k$ and is multiplied with $j^2$ no longer grows linearly with $k$. To make this precise, suppose first that $a\ne0$ and write $\widehat a=a/\|a\|_2$.  Retaining $\delta$ and $\phi$ from
\eqref{eq:index-effective-coefficients}, define
\begin{equation}
\widehat\chi:=\widehat a^\top s,
\qquad
\widehat\ell:=\ell+2\widehat\chi,
\qquad
\varepsilon_\chi(k):=
\bigl(F(ak)-\widehat a\bigr)^\top s.
\label{eq:norm-effective-terms}
\end{equation}
Then the modified quadratic and linear coefficients are
\begin{equation}
\begin{aligned}
C_{\mathrm{quad}}^{\mathrm{norm}}(k)
&=
\widehat\chi+\delta+\varepsilon_\chi(k),
\\
C_{\mathrm{lin}}^{\mathrm{norm}}(k)
&=
\widehat\ell k+\phi+2k\varepsilon_\chi(k).
\end{aligned}
\label{eq:norm-gap-coefficients}
\end{equation}
Thus,
\begin{equation}
g_{k+\Delta}^{\mathrm{norm}}-g_k^{\mathrm{norm}}
= \Delta C_{\mathrm{lin}}^{\mathrm{norm}}(k)+ \Delta^2 C_{\mathrm{quad}}^{\mathrm{norm}}(k).
\label{eq:norm-score-displacement}
\end{equation}
Moreover,
\begin{equation}
|\varepsilon_\chi(k)|
\le
\frac{\|s\|_2}{2k^2\|a\|_2^2},
\qquad
|2k\varepsilon_\chi(k)|
\le
\frac{\|s\|_2}{k\|a\|_2^2}.
\label{eq:norm-remainder-bound}
\end{equation}
Thus the $k^2$ drift in \eqref{eq:index-score-displacement} is replaced by a linear term and a bounded error $2k\varepsilon_\chi(k)=O(k^{-1})$. Only the coefficient $\widehat\ell$ remains to be confined to an interval of width $O(n^{-1})$.

\begin{theorem}[NESV of \INDEX{} under $\Anorm(h)$]
\label{thm:index-norm}
For $B>0$ and $h\ge2$. \begin{equation}
\Vol^{\INDEX,\Anorm(h)}(n)
=
\Theta_{B,h}(n^{-1}).
\label{eq:index-norm-volume}
\end{equation}
\end{theorem}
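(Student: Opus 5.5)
We follow the two-sided template used for \eqref{eq:index-sharp} in \cref{thm:nesv-bounds}, replacing the pair of shrinking constraints $(|\chi|\lesssim n^{-2},|\ell|\lesssim n^{-1})$ with the single constraint $|\widehat\ell|\lesssim n^{-1}$.

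\textbf{Lower bound.} We take a center analogous to \eqref{eq:index-routing-center}, but with $s$ not orthogonal to $a$, so that the quadratic coefficient is uniformly negative. For instance, choose $a^0=(2\eta,0)$, $s^0=(-\eta,0)$ and $b^0=0$. Then $\widehat\chi=-\eta$ and $\delta=0$. Next choose $r^0$ with $a^{0\top}r^0=2\eta^2$, so that $\widehat\ell=\ell+2\widehat\chi=0$; also $\phi=0$.

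The quadratic coefficient is then $C^{\mathrm{norm}}_{\mathrm{quad}}(k)=\widehat\chi+\delta+\varepsilon_\chi(k)$. By \eqref{eq:norm-remainder-bound}, this is at most $-\eta/2$ for all $k\ge k_0$, where $k_0$ is a constant. For the finitely many $k<k_0$ we compute directly: $F(ak)$ is a fixed vector with norm less than one, and positive multiples of $a$ give $F(ak)^\top s<0$. We then perturb. The coordinates $a,b,s,r_2,c_K$ range over a fixed-width box around the center. The single combination $\widehat\ell$, which is affine with slope $\|a\|_2\neq0$ in $r$ along the direction $\widehat a$, is restricted to $|\widehat\ell|\le cn^{-1}$, and $\phi$ stays small. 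We use the bound $|2k\varepsilon_\chi(k)|\le C/k$ together with $|\Delta|\ge1$. From these, \eqref{eq:norm-score-displacement} gives
\[
g^{\mathrm{norm}}_{k+\Delta}-g^{\mathrm{norm}}_k\le -c'\Delta^2+|\Delta|\bigl(cn^{-1}k+|\phi|+C/k\bigr).
\]
This is at most $-c''\Delta^2$ once the small-$k$ terms are absorbed, for example by keeping $|\phi|$ small and handling $k<k_0$ by compactness. The rest of the argument, namely attention concentration under $\log n$ scaling and fixed neighborhoods for $(C_V,b_V,W_1,b_1,W_2,b_2)$, is copied verbatim from the $\Apos$ lower bound. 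The volume is therefore $\Omega(n^{-1})$, coming from the $r$-slab.

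\textbf{Upper bound.} We reuse the small/large-margin split. The small-margin solutions contribute $O(n\cdot n^{-5})$ volume. For large-margin solutions, the Lipschitz argument gives $C^{\mathrm{norm}}_{\mathrm{quad}}(k)+|C^{\mathrm{norm}}_{\mathrm{lin}}(k)|\le M_0$ for every $k$. Here $C^{\mathrm{norm}}_{\mathrm{quad}}$ is bounded outright, since $\|F\|<1$. Hence $|\widehat\ell k+\phi+2k\varepsilon_\chi(k)|\le M_1$. Taking the difference at $k=2M$ and $k=M$ with $M=\lfloor n/4\rfloor$ gives $|\widehat\ell|M\le 2M_1+O(1/(M\|a\|_2^2))$.

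So on the set where $\|a\|_2\ge n^{-1/2}$ (or any threshold making $1/(n\|a\|^2)=o(1)$), we get $|\widehat\ell|=O(n^{-1})$. For fixed $a$ and $s$, the event $|\ell+2\widehat\chi|\le Cn^{-1}$ is a slab in $r$ of width $O(n^{-1}/\|a\|_2)$. Integrating gives
\[
\int \min\{1,n^{-1}/\|a\|_2\}\,da=O(n^{-1}\log n)
\]
in two dimensions, since $\int_{\|a\|\le B}\|a\|^{-1}da$ is finite in $\R^2$. In fact $\int\|a\|^{-1}da<\infty$ there, so the bound is $O(n^{-1})$. The region $\|a\|_2<n^{-1/2}$ has measure $O(n^{-1})$. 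The case $a=0$ has measure zero.

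\textbf{Main obstacle.} The delicate step is the small-$\|a\|$ regime in the upper bound, where $F(ak)$ has not saturated for moderate $k$ and the error bound \eqref{eq:norm-remainder-bound} is weak. I would first try to bound it crudely by its measure, as above. If the crude bound fails, I would instead work with the exact $C^{\mathrm{norm}}_{\mathrm{lin}}$ at $k\in\{M,2M,3M\}$ without splitting off $\varepsilon_\chi$.
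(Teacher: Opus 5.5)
Your upper bound is correct and is essentially the paper's argument. The one difference is that you difference the drift bound at $M$ and $2M$ to cancel $\phi$, whereas the paper evaluates at $k_0=\lfloor n/2\rfloor$ and uses $|\phi|=O_B(1)$. The slab integral is $O(n^{-1})$ because $\int\|a\|_2^{-1}\,da<\infty$ over bounded sets in the plane.

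The lower bound, however, fails: your center is not an exact solution, so no compactness argument can rescue it. There is first a slip. At your center $\widehat\ell=a^\top r-2\eta$, so you need $a^{0\top}r^0=2\eta$ (i.e.\ $r^0_1=1$), not $2\eta^2$. The real problem is that $\widehat\chi=-\eta\neq0$ makes the drift remainder $2k\varepsilon_\chi(k)=2k\bigl(1-\lambda_a(k)\bigr)\eta$ large before $F(ak)$ saturates. At $k=1$, with $\widehat\ell=\phi=0$, one gets $g^{\mathrm{norm}}_{1+\Delta}-g^{\mathrm{norm}}_1=2\eta\bigl(1-\lambda_a(1)\bigr)\Delta-\lambda_a(1)\eta\Delta^2$ with $\lambda_a(1)<2\eta$. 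This is positive for $1\le\Delta\le(1-2\eta)/\eta$, so the score peaks near $j\approx1/(2\eta)$, not at $j=1$. More generally, write $u=2\eta k$. At this center $C^{\mathrm{norm}}_{\mathrm{lin}}(k)-\phi=u\bigl(1-u/\sqrt{1+u^2}\bigr)$, whose values spread over a range of length about $0.3$ as $k$ runs over $[n]$. Meanwhile $|C^{\mathrm{norm}}_{\mathrm{quad}}(k)|\le\eta\le1/16$, so no choice of $\phi$ gives $|C^{\mathrm{norm}}_{\mathrm{lin}}(k)|<|C^{\mathrm{norm}}_{\mathrm{quad}}(k)|$ for every $k$. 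Your bound $|2k\varepsilon_\chi(k)|\le C/k$ is true, but here $C=|\widehat\chi|/\|a\|_2^2=1/(4\eta)$ dwarfs the curvature, so the small-$k$ terms cannot be ``absorbed.''

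The missing idea is that the curvature must come from $\delta=b^\top s$, with $\widehat\chi$ kept tiny. Keep the $\Apos$ center $a^0=(2\eta,0)$, $b^0=(0,-\eta)$, $r^0=(\eta,0)$, $s^0=(0,\eta)$. There $\widehat a^0\perp s^0$, so $\widehat\chi=\widehat\ell=\phi=0$ and $C^{\mathrm{norm}}_{\mathrm{quad}}(k)=\delta+\lambda_a(k)\widehat\chi\equiv-\eta^2$. So your worry that an orthogonal center lacks uniform curvature is unfounded. On a fixed neighborhood with $\|a\|_2\ge\eta$ and $|\widehat\chi|\le\eta^4/16$, the identity $2k\varepsilon_\chi(k)=2k\bigl(\lambda_a(k)-1\bigr)\widehat\chi$ gives $|2k\varepsilon_\chi(k)|\le|\widehat\chi|/\|a\|_2^2\le\eta^2/16$ uniformly in $k\ge1$. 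Then only $\widehat\ell$ needs the width-$O(n^{-1})$ window. Taking the curvature from $\widehat\chi$ could only work if $\|a\|_2$ were of order one, which the cube forbids for small $B$. Accordingly, your ``main obstacle'' is misplaced. The small-$\|a\|_2$ region in the upper bound is disposed of by its $O(n^{-1})$ measure, as you already observe. The genuinely delicate point is uniform control of the unsaturated regime at small $k$ in the lower bound.
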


\begin{proof}
The proof follows the same strategy as that of the \INDEX{} bounds for the positional model in \cref{sec:index-components}. The full proof is given in \cref{app:norm}.
\end{proof}

Compared with the original positional model, selective normalization removes one shrinking direction from the volume calculation. In $\Apos(h)$, the two effective constraints have widths $n^{-2}$ and $n^{-1}$; in $\Anorm(h)$, the saturation of the quadratic query channel leaves only the $n^{-1}$ constraint. %
The improvement from $\Theta(n^{-3})$ to $\Theta(n^{-1})$ therefore follows from the same score-level mechanism identified by the original proof.

\subsection{Length-generalization experiments}
\label{sec:index-norm-experiments}

We train $\Apos(h)$ and $\Anorm(h)$ under the same protocol, using paired seeds, the same training batches, and identical initial values for all parameters shared by the two models. The main comparison uses classifier width $h=8$, maximum training length $N=51$, and 20 seeds.

As shown in \cref{fig:index-norm-generalization}, the original model degrades rapidly beyond the training range, whereas the selectively normalized model retains substantially higher accuracy. At length 511, approximately ten times the maximum training length, mean accuracy increases from $60\%$ for $\Apos(h)$ to
approximately $85\%$ for $\Anorm(h)$. Experiments with maximum training lengths $N\in\{11,21,31,41\}$ show similar results and are reported in \cref{app:index-norm-experiments}.

\begin{figure}
\centering
\vspace*{-2ex}
\includegraphics[scale=0.65]{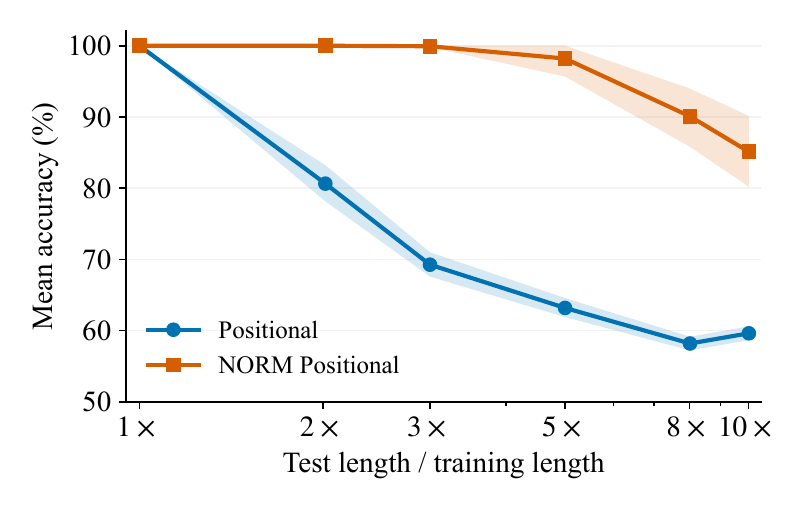}
\vspace*{-2ex}
\caption{Length generalization on \INDEX\ for the positional and $\Anorm(h)$ models trained up to $N=51$. Shading shows $95\%$ bootstrap confidence intervals. }
\label{fig:index-norm-generalization}
\end{figure}

Taken together, the coefficient analysis and the architectural intervention give two complementary forms of evidence. The original volume proof identifies the parameter combinations whose effects grow with the index $k$, and those combinations are associated with the behavior of trained models. Selectively normalizing the quadratic query channel removes the faster-growing contribution, enlarges the exact-solution volume from $\Theta(n^{-3})$ to $\Theta(n^{-1})$, and improves extrapolation in the model for which the volume calculation is carried out.

\section{Related Work}

The \INDEX{} task has been studied many times before.
\citet{bhattamishra2024separations} give an efficient one-layer construction in which the data appear before the indexed \INDEX{}, as in our right-hand \INDEX\ task. This differs from left-hand \INDEX{}ing, where the \INDEX{} appears before the data and the answer is produced later \citep{kozachinskiy2025strassen,kozachinskiy2026indexing}. One-layer constructions for related right-hand function-evaluation problems are also studied by \citet{strobl2025concise}.

A line of work related to our normalized model $\Anorm$ controls attention by normalizing query--key interactions. QKNorm $\ell_2$-normalizes complete query and key vectors and learns the attention scale \citep{henry-etal-2020-query}, while scaled cosine attention likewise removes dependence on their magnitudes \citep{liu2022swin}. Separately, approaches to length extrapolation often modify the positional contribution to attention, for example by adding distance-dependent score biases \citep{press2022train} or by designing relative position encoding to improve attention resolution beyond the training lengths \citep{sun-etal-2023-length}. Our $\Anorm(h)$, on the other hand, does not normalize the full query or key vectors, nor the quadratic positional coordinate $j^2$. Instead, it applies a fixed smooth bounded map to the index-dependent query component in the quadratic positional channel. %

Another related line of work studies which functions occupy large or stable regions of transformer parameter space. \citet{hahn2024sensitive} connect Boolean sensitivity to transformer norms and perturbation behavior, while \citet{kover2026geometry} study the parameter-space geometry of transformers computing Boolean functions. Empirical work on simplicity bias likewise suggests that the parameter-to-function map favors some Boolean functions over others \citep{bhattamishra2023simplicity}.
Our volume measure tracks how volume changes with sequence length, rather than assigning a single complexity or sensitivity measure to the target task. For \INDEX, such analysis not only shows that the volume decreases with longer input lengths, but also isolates the sources of such shrinkage.

\section{Discussion and Limitations}

\paragraph{Initialization and optimization}

Parameter-space volume depends on the distribution placed on the parameters.
\Cref{eq:volume} assumes a uniform distribution over a box of size $2B$ in the raw coordinates. 
But initialization and optimization can concentrate strongly on particular parts of parameter space, so differences in exact-solution volume might not translate exactly into differences in  accuracy.
Tightening the correspondence would therefore require combining the present analysis with the distributions over parameters induced by initialization and optimization.

\paragraph{Depth}

Our theoretical results concern fixed-width, one-layer, one-head transformers. This restriction can matter qualitatively. \citet{bhattamishra2024separations} show examples where two-layer transformers of logarithmic size solve problems for which one-layer transformers require linear size. The parameter constraints governing length generalization may therefore change substantially with depth, and extending the volume analysis to deeper transformers is an important next step.

\section{Conclusion}

Parameter-space volume is useful not only for measuring how exact solutions change with input length, but also for revealing \emph{why}\ they change. On the \INDEX{} task, our volume analysis isolates the length-sensitive components, suggesting a modification to the model itself that improves length extrapolation. This suggests a broader use of volume analysis as a diagnostic for transformer model refinement, especially task-specific ones.

\section*{Reproducibility Statement}
The baseline architectures, parameter prior, and strict-correctness criterion are defined in \cref{sec: setup}. The $\Anorm(h)$ architecture is defined in \cref{sec:index-norm}, and the fixed-sinusoidal \MAJORITY\ model is defined in \cref{app:sinusoidal-majority-model}. The complete volume arguments, experimental protocol, checkpoint-selection rule, evaluation-bank construction, and supplementary results are presented in the appendix.

\section*{AI Use Statement}
In this work, we used generative AI tools to aid literature exploration, assist in the writing of proofs (prose editing, LaTeX preparation, proofreading), create or modify scientific figures, and edit code (for experiments). We have not used generative AI tools to develop theoretical models or conceptual frameworks, formulate mathematical claims, provide critical ingredients for proving mathematical claims, propose or refine hypotheses, design or provide feedback on research methodology or experiments, or assist with translation. %
We have reviewed all AI-assisted work and take responsibility for the final content of this work, including text, claims or artifacts produced with the aid of generative~AI.

\bibliographystyle{iclr2027_conference}
\bibliography{references}

\crefalias{section}{appendix}

\appendix

\section{Proofs of the Baseline NESV Bounds}
\label{app:baseline-proofs}

This appendix proves the four baseline bounds in \cref{thm:nesv-bounds}.  

\subsection{\FIRST}
\label{app:first}

The lower bound constructs fixed neighborhoods of parameters independent of input length that gives exact solutions to the task.  The upper bound follows from symmetry between the two labels.

\begin{proof}[Proof of the \FIRST\ bound in \cref{thm:nesv-bounds}]
Choose $\eta=\min\{B/16,1/16\}$ and set
\begin{equation}
  a^0=(2\eta,0),
  \qquad
  b^0=(0,-\eta),
  \qquad
  r^0=(\eta,0),
  \qquad
  s^0=(0,\eta),
  \label{eq:first-center}
\end{equation}
with $c_K^0=b_V^0=0$ and $C_V^0=\eta I_2$. We call this set of $(a^0, b^0, r^0, s^0) $ the center.  Put $  q_F:=a+b$,  $\widehat\ell_1:=q_F^\top r$,  $\widehat\ell_2:=q_F^\top s.$
At the center, $\widehat\ell_1^0=2\eta^2$ and $\widehat\ell_2^0=-\eta^2$.  By continuity, there is a fixed positive-volume neighborhood of $(a,b,r,s)$ on which
\begin{equation}
  |\widehat\ell_1-2\eta^2|\le \frac{\eta^2}{8},
  \qquad
  |\widehat\ell_2+\eta^2|\le \frac{\eta^2}{8}.
  \label{eq:first-zeta-neighborhood}
\end{equation}
For $j=1+\Delta$ with $\Delta\ge1$, the key bias cancels and
\begin{align}
  g(1)-g(1+\Delta)
  &=\Delta\bigl[(-\widehat\ell_1-2\widehat\ell_2)+(-\widehat\ell_2)\Delta\bigr]
  \notag\\
  &\ge \frac{\eta^2}{2}\Delta^2.
  \label{eq:first-score-gap}
\end{align}
Consequently, throughout this fixed positional-parameter neighborhood,
\begin{equation}
  1-\alpha_1
  \le
  \sum_{\Delta=1}^{\infty}
  n^{-(\eta^2/2)\Delta^2}
  \longrightarrow 0.
  \label{eq:first-attention-concentration}
\end{equation}
Restrict $(C_V,b_V)$ to a fixed neighborhood on which
\begin{equation}
  \|C_Ve_\sigma+b_V-\eta e_\sigma\|_\infty
  \le \frac{\eta}{16}
  \qquad (\sigma\in\{0,1\}).
  \label{eq:first-value-neighborhood}
\end{equation}
For all sufficiently large $n$, \eqref{eq:first-attention-concentration} gives $\alpha_1\ge7/8$.  If $y^\star=w_1$ and $\bar y^\star=1-y^\star$, the total attention masses on the two symbols satisfy
\begin{equation}
  p_{y^\star}-p_{\bar y^\star}\ge \frac34.
\end{equation}
Convexity of the value perturbation in \eqref{eq:first-value-neighborhood} therefore gives
\begin{equation}
  A_{y^\star}-A_{\bar y^\star}
  \ge
  \eta\bigl(p_{y^\star}-p_{\bar y^\star}\bigr)-\frac{\eta}{8}
  \ge \frac{5\eta}{8}.
  \label{eq:first-value-margin}
\end{equation}

Choose \(W_1^0,b_1^0,W_2^0,b_2^0\) as follows. Take the first two rows of $W_1^0$ to be $\eta e_0^\top$ and $\eta e_1^\top$, with both corresponding biases equal to $\eta^2$.  Set every remaining row of $W_1^0$ to zero and its bias to $-\eta$.  In $W_2^0$, give output row zero coefficient $\eta$ on hidden unit zero and output row one coefficient $\eta$ on hidden unit one, with all other entries zero; set $b_2^0=0$.  The first two hidden units are active throughout a sufficiently small neighborhood of \eqref{eq:first-value-neighborhood}, whereas the remaining units are inactive, and
\begin{equation}
  y_{0}-y_{1}
  =\eta^2(A_{0}-A_{1}).
\end{equation}
Thus \eqref{eq:first-value-margin} gives a correct-logit margin of at least that $y_0-y_1\ge$ $5\eta^3/8$ at the center.  Allow \(c_K\) to vary over a sufficiently small fixed neighborhood of \(0\). This does not affect the attention weights, since \(q_F^\top c_K\) is added to every attention score. Together with the fixed neighborhoods chosen above for \((a,b,r,s)\), \((C_V,b_V)\), and \((W_1,b_1,W_2,b_2)\), this gives a set of normalized volume \(c_F>0\), independent of \(n\).

For the upper bound, swap the two rows of $W_2$ and the two entries of $b_2$, leaving every other coordinate unchanged.  This is a measure-preserving involution of the raw parameter cube.  It maps the exact solutions of \FIRST\ to the exact solutions of the complementary task $w\mapsto1-w_1$.  The two exact-solution sets are disjoint, so each occupies at most one half of the cube.  In particular, the NESV is $O(1)$.
\end{proof}

\subsection{\INDEX}
\label{app:index}

We use the notation from \cref{sec:index-components}.  Thus the attention score at position $j$ is
\begin{equation}
  g_j=(ak+b)^\top(rj+sj^2+c_K),
\end{equation}
and
\begin{equation}
  \chi=a^\top s,
  \qquad
  \ell=2b^\top s+a^\top r,
  \qquad
  \delta=b^\top s,
  \qquad
  \phi=b^\top r.
  \label{eq:app-index-coefficients}
\end{equation}
For $k+\Delta\in[n]$, the \emph{score gap} at position $k+\Delta$ is
\begin{equation}
  g_{k+\Delta}-g_k
  =
  \Delta C_{\mathrm{lin}}(k)
  +\Delta^2C_{\mathrm{quad}}(k),
  \label{eq:app-index-displacement}
\end{equation}
where
\begin{equation}
  C_{\mathrm{lin}}(k)=2\chi k^2+\ell k+\phi,
  \qquad
  C_{\mathrm{quad}}(k)=\chi k+\delta.
  \label{eq:app-index-drift-polynomials}
\end{equation}

\subsubsection{Lower bound}
\label{sec:index_lower}

Choose $\eta=\min\{B/16,1/16\}$ and consider the solution (which we call the \emph{center}):
\begin{equation}
  a^0=(2\eta,0),
  \qquad
  b^0=(0,-\eta),
  \qquad
  r^0=(\eta,0),
  \qquad
  s^0=(0,\eta).
  \label{eq:app-index-center}
\end{equation}
At this point,
\begin{equation}
  \chi=\ell=\phi=0,
  \qquad
  \delta=-\eta^2,
\end{equation}
and hence
\begin{equation}
  g_k-g_j=\eta^2(j-k)^2.
  \label{eq:app-index-center-gap}
\end{equation}
To obtain a positive-volume family around this point, consider the local coordinate map
\begin{equation}
  \Psi(a,b,r,s)
  =
  (\chi,\ell,a_1,a_2,b_1,b_2,r_2,s_2).
  \label{eq:index-local-coordinate-map}
\end{equation}
We call $(a,b,r,s)$ the untransformed coordinates and $(\chi,\ell,a_1,a_2,b_1,b_2,r_2,s_2)$ the transformed coordinates. For fixed $(a,b,r_2,s_2)$, the Jacobian of $(s_1,r_1)\mapsto(\chi,\ell)$ is
\begin{equation}
  \begin{bmatrix}
    a_1 & 0\\
    2b_1 & a_1
  \end{bmatrix},
  \qquad
  \det=a_1^2.
  \label{eq:index-local-jacobian}
\end{equation}
The determinant is nonzero at \eqref{eq:app-index-center}.  The inverse function theorem therefore gives a fixed neighborhood on which $\Psi$ is one-to-one and the Jacobians of $\Psi$ and $\Psi^{-1}$ are bounded.  Hence there are constants \(c,C>0\), independent of \(n\), such that
every measurable set \(E\) in the transformed-coordinate neighborhood
satisfies $c\operatorname{vol}(E)
\le
\operatorname{vol}\!\left(\Psi^{-1}(E)\right)
\le
C\operatorname{vol}(E).$

Restrict
\begin{equation}
  |\chi|\le \frac{\eta^2}{64n^2},
  \qquad
  |\ell|\le \frac{\eta^2}{64n},
  \label{eq:app-index-shrinking-box}
\end{equation}
while restricting the remaining transformed coordinates to fixed-width intervals independent of $n$ sufficiently close to the center that
\begin{equation}
  \delta\le-\frac{3\eta^2}{4},
  \qquad
  |\phi|\le\frac{\eta^2}{16}.
  \label{eq:app-index-fixed-conditions}
\end{equation}
For every $k\le n$, these conditions imply
\begin{equation}
  C_{\mathrm{quad}}(k)
  \le -\frac{\eta^2}{2},
  \qquad
  |C_{\mathrm{lin}}(k)|
  \le \frac{\eta^2}{4}.
  \label{eq:app-index-uniform-drift-control}
\end{equation}
Indeed, the contribution of $\chi k$ to $C_{\mathrm{quad}}(k)$ is at most $\eta^2/64$, and the three contributions to $C_{\mathrm{lin}}(k)$ are bounded by $2\eta^2/64$, $\eta^2/64$, and $\eta^2/16$, respectively.  Therefore, for every nonzero integer $\Delta$ with $k+\Delta\in[n]$,
\begin{align}
  g_k-g_{k+\Delta}
  &=-\Delta C_{\mathrm{lin}}(k)
    -\Delta^2C_{\mathrm{quad}}(k)
  &\ge \frac{\eta^2}{4}\Delta^2.
  \label{eq:app-index-robust-gap}
\end{align}
Under the $\log n$ attention scale, the total attention outside the indexed position is consequently at most
\begin{equation}
  2\sum_{d=1}^{\infty}n^{-(\eta^2/4)d^2},
  \label{eq:app-index-off-target-mass}
\end{equation}
which tends to zero as $n\to\infty$ uniformly in the index \(k\) and over all positional parameters \((a,b,r,s)\) in the local neighborhood above satisfying
\eqref{eq:app-index-shrinking-box} and
\eqref{eq:app-index-fixed-conditions}.

Set \(C_V^0=\eta I_2\), \(b_V^0=0\), and take the parameter values
\[
W_1^0=
\begin{bmatrix}
\eta I_2\\
0_{(h-2)\times 2}
\end{bmatrix},
\qquad
b_1^0=(\eta^2,\eta^2,-\eta,\ldots,-\eta)^\top,
\]
\[
W_2^0=
\begin{bmatrix}
\eta I_2 & 0_{2\times(h-2)}
\end{bmatrix},
\qquad
b_2^0=0.
\] For all sufficiently large $n$, \eqref{eq:app-index-off-target-mass} makes the indexed position receive at least $3/4$ of the total attention.  If \(w_k=\sigma\), the \(\sigma\) coordinate of the attention output exceeds the other coordinate by at least \(\eta/2\), even if every other token has symbol \(1-\sigma\). With the chosen parameters, the first two hidden units are active and, for \(c\in\{0,1\}\), the \(c\)-th unit outputs $\eta A_c+\eta^2$. The corresponding row of \(W_2\) multiplies this quantity by \(\eta\), so $y_c=\eta^2A_c+\eta^3$. The common term \(\eta^3\) cancels between the two logits, and therefore $y_\sigma-y_{1-\sigma} = \eta^2\bigl(A_\sigma-A_{1-\sigma}\bigr) \ge \frac{\eta^3}{2}$. This margin is uniform over all inputs and indexes, so there exists fixed neighborhoods independent of $n$ of \(C_V,b_V,W_1,b_1,W_2,b_2\) that produce exact solutions. The key bias $c_K$ may also vary over a fixed neighborhood because it cancels from the attention weights.

The two intervals in \eqref{eq:app-index-shrinking-box} contribute normalized widths of orders $n^{-2}$ and $n^{-1}$, whereas all other coordinates vary over intervals of width independent of $n$. Therefore
\begin{equation}
  V_B^{\INDEX,\Apos(h)}(n)=\Omega_{B,h}(n^{-3}).
  \label{eq:app-index-lower-bound}
\end{equation}

\subsubsection{Upper bound}
\label{sec:index_upper}
For each indexed position $k$, compare a constant word with a word that differs from it only at position $k$. These two inputs must be assigned different labels. 
We consider two cases. Let $m = y_{w_k}(w,k)-y_{1-w_k}(w,k)$. In any correct solution, we will have $m > 0$,  but if $m \le n^{-5}$, we call the solution \emph{small-margin}; otherwise, we call it \emph{large-margin}.

Take any one of the inequalities $y_{w_k}(w,k)>y_{1-w_k}(w,k)$; there are $4n$ such inequalities in total.  Fix arbitrary values of $a,b,r,s,c_K,C_V,b_V,W_1,b_1,W_2$, and $b_{2,1-w_k}$. As $b_{2,w_k}$ varies, the margin of the inequality varies with slope one. To satisfy $0<y_{w_k}(w,k)-y_{1-w_k}(w,k)\le n^{-5}$, $b_{2,w_k}$ needs to lie in an interval of length $n^{-5}$. Thus the set of small-margin solutions has normalized volume at most $4n\cdot n^{-5}/(2B)=O(n^{-4})$.

For every large-margin solution, changing only the symbol at position $k$ from $\tau$ to $\sigma\ne\tau$ must change the predicted label from $\tau$ to $\sigma$. By the definition of a large-margin solution, for each of the two words $w$ in this comparison, $y_{w_k}(w,k)-y_{1-w_k}(w,k)>n^{-5}$. Since the $y_{\sigma}(w,k)-y_{\tau}(w,k)$, viewed as a function of the attention output $A(w,k)$ is Lipschitz, this is impossible if the attention assigned to position $k$ is too small. 
To see this, let $w$ be the constant-$\tau$ word and let $w'$ be the word obtained by changing only position $k$ to $\sigma$. Because the query and key maps read only positional coordinates, the two words have the same attention weights. Their attention outputs therefore satisfy
\begin{equation}
  A(w',k)-A(w,k)
  =
  \alpha_k C_V(e_\sigma-e_\tau),
\end{equation}
so
\begin{equation}
  \|A(w',k)-A(w,k)\|_\infty
  \le 2B\alpha_k.
  \label{eq:app-index-output-change}
\end{equation}
Moreover, ReLU is $1$-Lipschitz, each row of $W_1$ has $\ell_1$-norm at most $2B$, and the difference between rows $\sigma$ and $\tau$ of $W_2$ has $\ell_1$-norm at most $2hB$. Hence
\begin{align}
  &\left|
  \bigl(y_\sigma(w',k)-y_\tau(w',k)\bigr)
  -\bigl(y_\sigma(w,k)-y_\tau(w,k)\bigr)
  \right|
  \notag\\
  &\hspace{5em}\le
  4hB^2\|A(w',k)-A(w,k)\|_\infty
  \le 8hB^3\alpha_k.
\end{align}
The first difference is greater than $n^{-5}$ and the second is less than $-n^{-5}$, so
\begin{equation}
  \alpha_k
  \ge
  \frac{n^{-5}}{4hB^3}.
  \label{eq:app-index-attention-lower}
\end{equation}
Hence the indexed position receives at least an inverse polynomial amount of attention. The softmax formula in \eqref{eq:attention} then implies that $g_k$ lies within a constant of the maximum score. Indeed,
\begin{equation}
  \frac{n^{-5}}{4hB^3}
  \le
  \alpha_k
  =
  \frac{n^{g_k}}{\sum_{j=1}^n n^{g_j}}
  \le
  n^{g_k-\max_j g_j},
\end{equation}
and therefore
\begin{equation}
  \max_j g_j-g_k
  \le
  5+\frac{\log(4hB^3)}{\log n}
  \le M_0
  \label{eq:app-index-near-maximum}
\end{equation}
for all sufficiently large $n$, where $M_0$ is independent of $n$.
For every interior index $2\le k\le n-1$, comparing $g_k$ with the two neighboring scores in \eqref{eq:index-score-displacement} gives
\begin{equation}
  g_{k+1}-g_k
  =
  C_{\mathrm{lin}}(k)+C_{\mathrm{quad}}(k)
  \le M_0
\end{equation}
and
\begin{equation}
  g_{k-1}-g_k
  =
  -C_{\mathrm{lin}}(k)+C_{\mathrm{quad}}(k)
  \le M_0.
\end{equation}
Thus
\begin{equation}
  C_{\mathrm{quad}}(k)+|C_{\mathrm{lin}}(k)|
  \le M_0,
  \label{eq:app-index-neighbor-bound}
\end{equation}
where $M_0$ is independent of $n$.
Since every coordinate of $a,b,s$ lies in $[-B,B]$, both $\chi=a^\top s$ and $\delta=b^\top s$ are bounded independently of $n$. Hence, for $k\le n$, $|C_{\mathrm{quad}}(k)| = |\chi k+\delta| = O(n)$, and thus $|C_{\mathrm{lin}}(k)| \le M_0+|C_{\mathrm{quad}}(k)| = O(n).$ Let $M=\lfloor n/4\rfloor$. Since $C_{\mathrm{lin}}(k)=2\chi k^2+\ell k+\phi$, we have $C_{\mathrm{lin}}(3M)-2C_{\mathrm{lin}}(2M)+C_{\mathrm{lin}}(M)=4\chi M^2$. Because each term on the left is $O(n)$ and $M=\Theta(n)$, this gives $|\chi|=O(n^{-1})$. Moreover, $C_{\mathrm{lin}}(2M)-C_{\mathrm{lin}}(M)=6\chi M^2+\ell M$. Using $|\chi|=O(n^{-1})$ in this identity gives $|\ell|=O(1)$.
These preliminary bounds make
\begin{equation}
  |C_{\mathrm{quad}}(k)|
  =
  |\chi k+\delta|
  =
  O(1)
\end{equation}
uniformly for $k\le n$. Equation \eqref{eq:app-index-neighbor-bound} then gives $|C_{\mathrm{lin}}(k)|=O(1)$. Repeating the same argument, now using these improved bounds, gives
\begin{equation}
  |\chi|=O(n^{-2}),
  \qquad
  |\ell|=O(n^{-1}).
  \label{eq:app-index-sharp-coefficient-constraints}
\end{equation}
We used the fact $\Pr\left(
 |a^\top s|\le\varepsilon_1,
 \ |2b^\top s+a^\top r|\le\varepsilon_2
\right)=O_B(\varepsilon_1\varepsilon_2)$ whenever $0<\varepsilon_1\le\varepsilon_2$ are sufficiently small. 
\begin{lemma}[Joint two-slab estimate]
\label[lemma]{lem:index-joint-slab}
Let $a,b,r,s$ be independent and uniform on $[-B,B]^2$.  There are constants $C_{\mathrm{slab}},\varepsilon_B>0$, depending only on $B$, such that, whenever $0<\varepsilon_1\le\varepsilon_2\le\varepsilon_B$,
\begin{equation}
  \Pr\!\left(
    |a^\top s|\le\varepsilon_1,
    \ |2b^\top s+a^\top r|\le\varepsilon_2
  \right)
  \le
  C_{\mathrm{slab}}\varepsilon_1\varepsilon_2.
  \label{eq:app-joint-slab}
\end{equation}
\end{lemma}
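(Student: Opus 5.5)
The idea is to condition on $(a,s)$ and integrate out $(b,r)$. For fixed $(a,s)$, the second quantity $2b^\top s+a^\top r$ is a linear form in the independent uniform vector $(b,r)$, with coefficient vector $v=(2s,a)\in\R^4$. Averaging over $b$ as well as $r$ is the step that removes the logarithm. If one conditions only on $a$ and uses $r$ alone, one gets a factor $\min(1,\varepsilon_2/\|a\|)$. Integrating that against the $\varepsilon_1/\|a\|$ slab bound in $s$ gives $\varepsilon_1\varepsilon_2\log(1/\varepsilon_2)$. Conditioning symmetrically on $s$ and using $b$ alone has the same defect. Using both vectors, the relevant scale is $\rho:=\|(a,s)\|_2$, which is small only when $a$ and $s$ are small simultaneously. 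The set where that happens is thin enough to kill the logarithm.

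\emph{Step 1 (inner integral).} A linear combination $\sum_i v_iU_i$ of independent uniforms on $[-B,B]$ has density at most $1/(2B\|v\|_\infty)$. To see this, condition on every $U_i$ except the one with the largest $|v_i|$. Since $\|(2s,a)\|_\infty\ge \rho/(2\sqrt2)$, for every fixed $(a,s)$ we get
\[
\Pr_{b,r}\!\left(|2b^\top s+a^\top r|\le\varepsilon_2\right)\le \min\!\left(1,\frac{C\varepsilon_2}{\rho}\right),
\]
with $C$ depending only on $B$. The probability in \eqref{eq:app-joint-slab} is therefore at most $(2B)^{-4}\int_{\{|a^\top s|\le\varepsilon_1\}}\min(1,C\varepsilon_2/\rho)\,d(a,s)$, where the integral is over $[-B,B]^4$.

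\emph{Step 2 (dyadic shells in $(a,s)$).} Let $S_t=\{(a,s): t\le\rho<2t\}$, and take $t=2^{-m}\cdot 4B$ for $m\ge0$. The key estimate is
\[
\operatorname{vol}\bigl(S_t\cap\{|a^\top s|\le\varepsilon_1\}\bigr)\le C\varepsilon_1t^2 .
\]
The function $f(a,s)=a^\top s$ has $\|\nabla f\|_2=\rho\ge t$ on $S_t$. By the coarea formula this volume equals $\int_{-\varepsilon_1}^{\varepsilon_1}\int_{\{f=u\}\cap S_t}\|\nabla f\|^{-1}\,d\mathcal H^3\,du$, which is at most $2\varepsilon_1t^{-1}\sup_u\mathcal H^3(\{f=u\}\cap B(0,2t))$. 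This is the step I expect to be the main obstacle: a uniform-in-$u$ bound $\mathcal H^3(\{a^\top s=u\}\cap B(0,2t))\le Ct^3$ on the area of a quadric level set inside a ball. I would obtain it from a Crofton or integral-geometric bound. A degree-2 hypersurface meets almost every line in at most two points, so its area inside a ball of radius $2t$ is at most a constant times $t^3$, independently of $u$. An alternative is to rescale to $t=1$, which replaces $u$ by $u/t^2$, and use that $\{f=u'\}\cap B(0,2)$ has uniformly bounded area. If the level set is awkward near the cone $u=0$, the crude bound $\operatorname{vol}(S_t)\le Ct^4$ already suffices whenever $t^2\le\varepsilon_1$.

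\emph{Step 3 (summation).} Combining the two steps,
\[
\Pr\le C\sum_{m\ge0}\varepsilon_1t^2\min\!\left(1,\frac{\varepsilon_2}{t}\right)\le C\varepsilon_1\varepsilon_2\sum_{m\ge0}t\le C'\varepsilon_1\varepsilon_2 .
\]
The series converges because $\sum_m t=\sum_m 2^{-m}\cdot 4B$ is geometric. This gives \eqref{eq:app-joint-slab} with $C_{\mathrm{slab}}$ depending only on $B$; the shells cover all of $[-B,B]^4$ since $\rho\le 2\sqrt2B<4B$ there. The argument does not actually need $\varepsilon_1\le\varepsilon_2$ or $\varepsilon_2\le\varepsilon_B$, so the restriction $\varepsilon_B$ in the lemma is harmless. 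One could take $\varepsilon_B=1$, for instance.
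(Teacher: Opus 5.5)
Your proof is correct, and its skeleton matches the paper's. Step~1 is exactly the paper's first step: the slab in $(b,r)$ with normal $(2s,a)$ gives the factor $C\min\{1,\varepsilon_2/\varrho\}$ with $\varrho=\|(a,s)\|_2$. Step~3 is a dyadic version of the paper's radial integral $\int_0^{R_0}\varrho^3\min\{1,\varepsilon_1/\varrho^2\}\min\{1,\varepsilon_2/\varrho\}\,d\varrho$.

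The real difference is how you get the middle estimate: that $\{|a^\top s|\le\varepsilon_1\}$ fills at most a fraction $C\min\{1,\varepsilon_1/\varrho^2\}$ of the sphere of radius $\varrho$ (equivalently, your shell bound $C\varepsilon_1t^2$). The paper computes this directly. Writing $a=\varrho\cos\vartheta\,\widehat a$ and $s=\varrho\sin\vartheta\,\widehat s$ gives $|a^\top s|=\varrho^2\sin\vartheta\cos\vartheta\,|\cos\varphi|$, where $\varphi$ is the angle between $\widehat a$ and $\widehat s$. The constraint therefore cuts out a fraction $O(\varepsilon_1/(\varrho^2\sin\vartheta\cos\vartheta))$ of $\varphi$. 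The factor $\sin\vartheta\cos\vartheta$ in the volume element cancels that singularity when you integrate over $\vartheta$. This is elementary and avoids exactly the step you flagged as the main obstacle, since no area bound for level sets is needed.

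Your coarea-plus-Crofton argument is also valid. The gradient of $a^\top s$ has norm exactly $\rho$, and a quadric meets almost every line in at most two points, so its area inside $B(0,2t)$ is $O(t^3)$ uniformly in the level $u$. What your route buys is coordinate-freeness: it applies verbatim to any nondegenerate quadratic form in any dimension. The cost is invoking integral geometry for what is here a short explicit computation.

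Your remark that the hypotheses $\varepsilon_1\le\varepsilon_2\le\varepsilon_B$ are unnecessary is correct, and it applies to the paper's proof too. Bounding both minima by their second arguments makes the radial integrand identically $\varepsilon_1\varepsilon_2$, so the paper's case split is unnecessary.
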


\begin{proof}
Scaling changes only the constant, so take $B=1$.  Fix $(a,s)$, and set
\begin{equation}
  \varrho=(\|a\|_2^2+\|s\|_2^2)^{1/2}.
\end{equation}
The second constraint defines a slab in $(b,r)$ with normal vector $(2s,a)$. One coordinate of this vector has magnitude at least a constant multiple of $\varrho$, so, after fixing the other three coordinates, the allowed fraction of the remaining coordinate is at most 
\begin{equation}
  C\min\left\{1,\frac{\varepsilon_2}{\varrho}\right\}.
  \label{eq:joint-second-fraction}
\end{equation}
To estimate the first constraint at fixed radius $\varrho$. Write
\begin{equation}
  a=\varrho\cos\vartheta\,\widehat a,
  \qquad
  s=\varrho\sin\vartheta\,\widehat s,
  \qquad
  0\le\vartheta\le\frac{\pi}{2},
  \label{eq:joint-angular-parameterization}
\end{equation}
where $\widehat a,\widehat s\in\mathbb S^1$.  If $\varphi$ is the angle between $\widehat a$ and $\widehat s$, then
\begin{equation}
  |a^\top s|
  =
  \varrho^2\sin\vartheta\cos\vartheta\,|\cos\varphi|.
\end{equation}
For fixed $\vartheta$, the fraction of angle $\varphi$ satisfying the first constraint is at most
\begin{equation}
  C\min\left\{
    1,
    \frac{\varepsilon_1}
         {\varrho^2\sin\vartheta\cos\vartheta}
  \right\}.
\end{equation}
The angular density contains the factor $\sin\vartheta\cos\vartheta$, therefore integrating over $\vartheta$ bounds the angular fraction by
\begin{equation}
  C\min\left\{1,\frac{\varepsilon_1}{\varrho^2}\right\}.
  \label{eq:joint-first-fraction}
\end{equation}
because when $t\le1$, $\int_0^{\pi/2}\min\{\sin\vartheta\cos\vartheta,t\}\,d\vartheta =O(t)$.
The radial volume element for $(a,s)\in\mathbb R^4$is proportional to $\varrho^3d\varrho$.  Enlarging the cube to a fixed ball and combining \eqref{eq:joint-second-fraction} and \eqref{eq:joint-first-fraction}, the required probability is at most a constant times
\begin{equation}
  \int_0^{R_0}
  \varrho^3
  \min\left\{1,\frac{\varepsilon_1}{\varrho^2}\right\}
  \min\left\{1,\frac{\varepsilon_2}{\varrho}\right\}
  d\varrho.
  \label{eq:joint-radial-integral}
\end{equation}
Assume $0<\varepsilon_1\le\varepsilon_2\le1$.  If $\varepsilon_2\le\sqrt{\varepsilon_1}$, split the integral at $\varepsilon_2$ and $\sqrt{\varepsilon_1}$ to obtain
\begin{align}
  I
  &\le
  \int_0^{\varepsilon_2}\varrho^3\,d\varrho
  +\varepsilon_2
   \int_{\varepsilon_2}^{\sqrt{\varepsilon_1}}\varrho^2\,d\varrho
  +\varepsilon_1\varepsilon_2
   \int_{\sqrt{\varepsilon_1}}^{R_0}d\varrho
  \notag\\
  &\le C\varepsilon_1\varepsilon_2.
\end{align}
If $\sqrt{\varepsilon_1}\le\varepsilon_2$, split instead at $\sqrt{\varepsilon_1}$ and $\varepsilon_2$.
Rescaling restores $B$ and proves the lemma.
\end{proof}
Applying the above lemma to \eqref{eq:app-index-sharp-coefficient-constraints}, the large-margin solutions have normalized volume $O(n^{-3})$: the entries of $a,b,r,s$ are independent parameter coordinates, and all remaining coordinates integrate out. Adding the $O(n^{-4})$ small-margin volume gives
\begin{equation}
  V_B^{\INDEX,\Apos(h)}(n)
  =
  O_{B,h}(n^{-3}).
\end{equation}
Together with \eqref{eq:app-index-lower-bound}, this proves the \INDEX\ claim in \cref{thm:nesv-bounds}.

\subsection{\MAJORITY}
\label{app:majority}

In the position-free model $\Acount$, every input with the same number of ones produces the same attention output.  The proof therefore reduces \MAJORITY\ to a one-dimensional sequence indexed by the count and measures how much of the parameter space places the decision boundary between the two central counts.
Let
\begin{equation}
  t=\sum_{j=1}^n w_j,
  \qquad
  v_b=V(e_b),
  \qquad
  q=Q(0_2),
\end{equation}
and define the query--key score contrast
\begin{equation}
  X=q^\top W_K(e_1-e_0),
  \qquad
  \Gamma=\exp((\log n)X).
  \label{eq:majority-score-contrast}
\end{equation}
All zero tokens share one score and all one tokens share another, so the total attention mass on the one tokens and the attention output are exactly
\begin{equation}
  \rho_t=\frac{t\Gamma}{n-t+t\Gamma},
  \qquad
  A_t=v_0+\rho_t(v_1-v_0).
  \label{eq:majority-count-path}
\end{equation}
Thus the $2^n$ inputs reduce to $n+1$ possible count levels. Throughout this subsection, write $n=2m+1$.
For an attention scale $\beta>0$ and a score contrast $x$, let 
\begin{equation}
  \Delta\rho_{n,\beta}(x)=\rho_{m+1}-\rho_m,
\end{equation}
where $\Gamma=e^{\beta x}$.  The counts $m$ and $m+1$ are the two adjacent levels at which the majority label changes.

\begin{lemma}[Central attention gap]
\label[lemma]{lem:central-attention-gap}
For $\Gamma=e^{\beta x}$,
\begin{align}
  \Delta\rho_{n,\beta}(x)
  &=
  \frac{ne^{\beta x}}
  {(m+1+me^{\beta x})(m+(m+1)e^{\beta x})}
  \label{eq:central-gap-rational}
  \\
  &=
  \frac{n}
  {n^2+(n^2-1)\sinh^2(\beta x/2)}.
  \label{eq:central-gap-hyperbolic}
\end{align}
Moreover,
\begin{equation}
  0<\Delta\rho_{n,\beta}(x)\le\frac1n,
  \qquad
  \int_{\mathbb R}\Delta\rho_{n,\beta}(x)\,dx
  =\frac{4\operatorname{artanh}(1/n)}{\beta}.
  \label{eq:central-gap-integral}
\end{equation}
\end{lemma}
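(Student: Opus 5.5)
The plan is to verify the closed forms by direct algebra from \eqref{eq:majority-count-path} and then evaluate the integral with the substitution $t=\tanh(\beta x/2)$, which turns it into an elementary rational integral. Throughout, $n=2m+1$ and $\Gamma=e^{\beta x}>0$.

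\emph{Rational form.} Since $n-(m+1)=m$ and $n-m=m+1$, we have
\[
\rho_{m+1}=\frac{(m+1)\Gamma}{m+(m+1)\Gamma},\qquad \rho_m=\frac{m\Gamma}{m+1+m\Gamma}.
\]
Put both over the common denominator $(m+1+m\Gamma)(m+(m+1)\Gamma)$. The numerator is $(m+1)\Gamma(m+1+m\Gamma)-m\Gamma(m+(m+1)\Gamma)$. The $\Gamma^2$ terms cancel, since both equal $m(m+1)\Gamma^2$. The $\Gamma$ terms give $((m+1)^2-m^2)\Gamma=n\Gamma$. This yields \eqref{eq:central-gap-rational}.

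\emph{Hyperbolic form.} Divide numerator and denominator by $\Gamma$. The denominator becomes
\[
m(m+1)(\Gamma+\Gamma^{-1})+m^2+(m+1)^2 .
\]
Use $\Gamma+\Gamma^{-1}=2\cosh(\beta x)=2+4\sinh^2(\beta x/2)$. The constant part is then $2m(m+1)+m^2+(m+1)^2=(2m+1)^2=n^2$, and the $\sinh^2$ coefficient is $4m(m+1)=n^2-1$. This gives \eqref{eq:central-gap-hyperbolic}.

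\emph{Bounds.} Positivity is immediate from the rational form, because every factor there is positive. The upper bound $1/n$ follows from the hyperbolic form, since $(n^2-1)\sinh^2(\beta x/2)\ge0$ and so the denominator is at least $n^2$.

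\emph{Integral.} Substitute $u=\beta x/2$, so $dx=(2/\beta)\,du$. The integral becomes
\[
\frac{2n}{\beta}\int_{\mathbb R}\frac{du}{n^2+(n^2-1)\sinh^2u}.
\]
Next set $t=\tanh u$, which maps $\mathbb R$ bijectively onto $(-1,1)$. Then $\sinh^2u=t^2/(1-t^2)$ and $du=dt/(1-t^2)$. Multiplying through by $1-t^2$ gives
\[
n^2(1-t^2)+(n^2-1)t^2=n^2-t^2,
\]
so the inner integral is
\[
\int_{-1}^{1}\frac{dt}{n^2-t^2}=\frac{2}{n}\operatorname{artanh}(1/n).
\]
Multiplying by $2n/\beta$ gives $4\operatorname{artanh}(1/n)/\beta$.

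I expect no real obstacle here. The only step needing care is the change of variables in the integral. Choosing $t=\tanh u$ is what collapses the $\sinh^2$ denominator to $n^2-t^2$, and the edge case $n=1$ is excluded because the formula uses $\operatorname{artanh}(1/n)$.
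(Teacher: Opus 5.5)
Your proposal is correct and follows the paper's proof essentially step for step. You subtract the two count levels to get the rational form, divide by $\Gamma$ and use $n=2m+1$ to get the hyperbolic form, read off the bounds, and evaluate the integral via $\tanh(\beta x/2)$, which reduces it to $\frac{2n}{\beta}\int_{-1}^{1}\frac{dt}{n^2-t^2}$; you also spell out algebra the paper leaves implicit, such as $\Gamma+\Gamma^{-1}=2+4\sinh^2(\beta x/2)$ and the final antiderivative.
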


\begin{proof}
Substitute $t=m,m+1$ into \eqref{eq:majority-count-path} and subtract to obtain \eqref{eq:central-gap-rational}.  Dividing its denominator by $e^{\beta x}$ and using $n=2m+1$ gives \eqref{eq:central-gap-hyperbolic}.  The maximum is attained at $x=0$. For the integral, set $z=\tanh(\beta x/2)$.  Then
\begin{equation}
  dx=\frac{2\,dz}{\beta(1-z^2)},
  \qquad
  \Delta\rho_{n,\beta}(x)=\frac{n(1-z^2)}{n^2-z^2}.
\end{equation}
Therefore
\begin{equation}
  \int_{\mathbb R}\Delta\rho_{n,\beta}(x)\,dx
  =\frac{2n}{\beta}\int_{-1}^1\frac{dz}{n^2-z^2}
  =\frac{4\operatorname{artanh}(1/n)}{\beta}.
\end{equation}
\end{proof}

For fixed values of all parameters except $b_{2,1}$, exact correctness requires $b_{2,1}$ to place the decision threshold between the two central count levels.  Define the bias-free logit difference
\begin{equation}
  H(z)
  =(e_1-e_0)^\top W_2\relu(W_1z+b_1).
\end{equation}

\begin{lemma}[Output-bias fiber]
\label[lemma]{lem:majority-bias-fiber}
Condition on every raw parameter except $b_{2,1}$.  The set of values of $b_{2,1}$ that can extend the conditioned coordinates to an exact \MAJORITY\ solution has length at most
\begin{equation}
  8hB^3\Delta\rho_{n,\beta}(X).
  \label{eq:majority-bias-fiber-length}
\end{equation}
\end{lemma}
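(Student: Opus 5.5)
Once every raw parameter except $b_{2,1}$ is fixed, the attention output depends on the input only through the count $t$, and $A_t=v_0+\rho_t(v_1-v_0)$ as in \eqref{eq:majority-count-path}. The two logits differ by
\[
y_1-y_0 = H(A_t) + b_{2,1} - b_{2,0}.
\]
Exact correctness at the two central counts therefore requires two strict inequalities. At count $m$ the correct label is $0$, so $H(A_m)+b_{2,1}-b_{2,0}<0$. At count $m+1$ the correct label is $1$, so $H(A_{m+1})+b_{2,1}-b_{2,0}>0$. I will drop all other count constraints, since discarding constraints can only enlarge the admissible set of $b_{2,1}$.

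These two inequalities confine $b_{2,1}$ to the open interval
\[
\bigl(b_{2,0}-H(A_{m+1}),\; b_{2,0}-H(A_m)\bigr),
\]
whose length is at most $|H(A_{m+1})-H(A_m)|$ when nonempty. So the whole task is a Lipschitz bound on $H$, with the difference of inputs controlled by the central attention gap.

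For the difference of inputs, $A_{m+1}-A_m=(\rho_{m+1}-\rho_m)(v_1-v_0)$, and $\rho_{m+1}-\rho_m=\Delta\rho_{n,\beta}(X)$ with $\beta=\log n$. Each coordinate of $v_1-v_0=W_V(e_1-e_0)$ is a difference of two entries in $[-B,B]$. Hence $\|A_{m+1}-A_m\|_\infty\le 2B\,\Delta\rho_{n,\beta}(X)$; \cref{lem:central-attention-gap} guarantees $\Delta\rho_{n,\beta}(X)>0$, so no absolute value is needed on it.

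For the Lipschitz bound on $H$, I follow the argument already used for \INDEX{}. ReLU is $1$-Lipschitz, and each row of $W_1\in\R^{h\times 2}$ has $\ell_1$-norm at most $2B$, so every hidden unit changes by at most $2B\|\Delta A\|_\infty$. The vector $(e_1-e_0)^\top W_2$ has $h$ entries, each of magnitude at most $2B$, so its $\ell_1$-norm is at most $2hB$. Together these give
\[
|H(z)-H(z')|\le 4hB^2\|z-z'\|_\infty.
\]
Combining with the bound on $\|A_{m+1}-A_m\|_\infty$ gives an interval length of at most
\[
4hB^2\cdot 2B\,\Delta\rho_{n,\beta}(X)=8hB^3\Delta\rho_{n,\beta}(X),
\]
which is \eqref{eq:majority-bias-fiber-length}.

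The main obstacle is only bookkeeping: confirming that $\rho_m$ and $\rho_{m+1}$, and hence $X$, do not depend on $b_{2,1}$. This holds because $b_{2,1}$ enters only the final affine layer. It also has to be checked that the constants match the raw box constraints, so that the bound holds uniformly over all values of the conditioned coordinates.
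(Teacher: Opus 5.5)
Your proposal is correct and matches the paper's proof essentially step for step. The two central-count inequalities confine $b_{2,1}$ to an interval of length $H(A_{m+1})-H(A_m)$, which you bound by the $4hB^2$ Lipschitz constant of $H$ times $\|A_{m+1}-A_m\|_\infty\le 2B\,\Delta\rho_{n,\beta}(X)$, giving $8hB^3\Delta\rho_{n,\beta}(X)$.
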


\begin{proof}
Correctness on the count-$m$ and count-$(m+1)$ levels requires
\begin{align}
  H(A_m)+b_{2,1}-b_{2,0}&<0,\\
  H(A_{m+1})+b_{2,1}-b_{2,0}&>0.
\end{align}
Thus $b_{2,1}$ must lie in
\begin{equation}
  \bigl(b_{2,0}-H(A_{m+1}),\ b_{2,0}-H(A_m)\bigr),
\end{equation}
whose length is $H(A_{m+1})-H(A_m)$ when this difference is positive. ReLU is $1$-Lipschitz, each row of $W_1$ has $\ell_1$-norm at most $2B$, and the difference between the two rows of $W_2$ has $\ell_1$-norm at most $2hB$. Hence $H$ is $4hB^2$-Lipschitz in $\ell_\infty$. Moreover, $A_{m+1}-A_m =\Delta\rho_{n,\log n}(X)(v_1-v_0)$, so
\begin{equation}
  \|A_{m+1}-A_m\|_\infty
  \le 2B \Delta\rho_{n,\beta}(X).
\end{equation}
Multiplying the two bounds proves the claim.
\end{proof}

To average the fiber length over the query and key parameters, we use a uniform density bound for $X$.

\begin{lemma}[Density of the score contrast]
\label[lemma]{lem:majority-contrast-density}
Under the uniform raw-cube measure, the random variable $X=q^\top W_K(e_1-e_0)$ has a density $f_X$ satisfying
\begin{equation}
  \|f_X\|_\infty\le B^{-2}.
\end{equation}
\end{lemma}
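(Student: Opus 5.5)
The plan is to compute $X$ explicitly in raw coordinates, bound its density conditionally on the query, and then average over the query. In $\Acount$ the index token is $z=0_2$, so $q=Q(0_2)=b_Q$, which is uniform on $[-B,B]^2$. Write $u=W_K(e_1-e_0)$. Its coordinates are $u_i=(W_K)_{i1}-(W_K)_{i0}$ for $i=1,2$. These are independent of each other and of $b_Q$, because they involve disjoint raw coordinates. So
\[
X=q_1u_1+q_2u_2 .
\]
Each $u_i$ is a difference of two independent $\mathrm{Unif}[-B,B]$ variables. It therefore has the triangular density on $[-2B,2B]$, whose supremum is $1/(2B)$.

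\textbf{Step 1 (conditional density).} Fix $q$ with $q\neq0$ and choose the index $i$ with $|q_i|=\|q\|_\infty$. The variable $q_iu_i$ has density bounded by $\frac{1}{2B|q_i|}$. Conditionally on $q$, $X$ is the sum of $q_iu_i$ and an independent variable $q_{3-i}u_{3-i}$. Convolving a density with any independent probability law does not increase its supremum. Hence
\[
f_{X\mid q}(x)\le \frac{1}{2B\,\|q\|_\infty}\quad\text{for all }x .
\]
This holds for almost every $q$.

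\textbf{Step 2 (averaging over $q$).} The unconditional density is the mixture $f_X(x)=\mathbb E_q\bigl[f_{X\mid q}(x)\bigr]$, which is justified by Fubini. It is therefore enough to compute $\mathbb E[1/\|q\|_\infty]$. For $q$ uniform on $[-B,B]^2$ we have $\Pr(\|q\|_\infty\le t)=(t/B)^2$, so $\|q\|_\infty$ has density $2t/B^2$ on $[0,B]$. This gives
\[
\mathbb E\bigl[\|q\|_\infty^{-1}\bigr]=\int_0^B \frac1t\cdot\frac{2t}{B^2}\,dt=\frac{2}{B},
\]
and hence
\[
\|f_X\|_\infty\le \frac{1}{2B}\cdot\frac{2}{B}=B^{-2}.
\]

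\textbf{Main obstacle.} The only delicate point is that the conditional bound blows up as $q\to0$. Conditioning on all parameters except a single entry of $W_K$ would leave a factor $1/|q_1|$, and that factor is not integrable over one coordinate. Taking the larger coordinate of $q$, i.e.\ using $\|q\|_\infty$, is what makes the singularity integrable in two dimensions. Checking this integrability, and making the mixture-of-densities step rigorous, is where care is needed; the rest is routine.
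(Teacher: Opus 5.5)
Your proposal is correct and follows essentially the same route as the paper's proof. Both condition on $q=b_Q$, use the coordinate of $W_K(e_1-e_0)$ paired with the larger entry $\|q\|_\infty$ to get the conditional bound $1/(2B\|q\|_\infty)$, and then average using $\mathbb E\bigl[\|q\|_\infty^{-1}\bigr]=2/B$.
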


\begin{proof}
Write $\Delta_K=W_K(e_1-e_0)$.  Each coordinate of $\Delta_K$ is the difference of two independent uniform variables and has triangular density bounded by $(2B)^{-1}$.  Conditional on $q_{\mathrm{agg}}\ne0$, choose a coordinate with magnitude $\|q_{\mathrm{agg}}\|_\infty$ and condition on the other coordinate of $\Delta_K$.  Scaling gives
\begin{equation}
  \|f_{X\mid q_{\mathrm{agg}}}\|_\infty
  \le
  \frac{1}{2B\|q_{\mathrm{agg}}\|_\infty}.
\end{equation}
Since $q_{\mathrm{agg}}=Q(0)=b_Q$ is uniform on $[-B,B]^2$,
\begin{equation}
  \mathbb E\bigl[\|q_{\mathrm{agg}}\|_\infty^{-1}\bigr]
  =\frac{2}{B}.
\end{equation}
Averaging proves the result.
\end{proof}

By Fubini's theorem and \cref{lem:central-attention-gap,lem:majority-bias-fiber,lem:majority-contrast-density}, \begin{align}
  V_B^{\MAJORITY,\Acount(h)}(n)
  &\le
  4hB^2\,\mathbb E\bigl[\Delta\rho_{n,\log n}(X)\bigr]
  \notag\\
  &\le
  4hB^2\|f_X\|_\infty
  \int_{\mathbb R}\Delta\rho_{n,\log n}(x)\,dx
  \notag\\
  &\le
  \frac{16h\operatorname{artanh}(1/n)}{\log n}.
  \label{eq:majority-upper-bound}
\end{align}
Since $\operatorname{artanh}(1/n)\le n/(n^2-1)$, this is $O_h((n\log n)^{-1})$.

For the matching lower bound, we need a positive mass of score contrasts on which the central gap is of order $1/n$ and a fixed-volume set of the remaining parameters that turns this gap into a correct bias interval.

\begin{lemma}[Lower contrast mass]
\label[lemma]{lem:majority-lower-contrast}
There is a constant $c_X=c_X(B)>0$ such that, for all sufficiently large $n$,
\begin{equation}
  \Pr\!\left(|X|\le\frac{1}{\log n}\right)
  \ge\frac{c_X}{\log n}.
  \label{eq:majority-lower-contrast-event}
\end{equation}
\end{lemma}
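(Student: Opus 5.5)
The plan is to condition on the query bias and then exploit the uniform density of the key contrast. Write $X=q^\top\Delta_K$ with $q=b_Q$ uniform on $[-B,B]^2$ and $\Delta_K=W_K(e_1-e_0)$. Under the raw cube measure, $\Delta_K$ has independent coordinates, each the difference of two independent uniforms on $[-B,B]$. These coordinates have triangular density on $[-2B,2B]$, and on $[-B,B]$ this density is at least $(4B)^{-1}$. Set $\varepsilon=1/\log n$. I will restrict to the event $E$ on which $q_1\ge B/2$; this has probability $1/4$ and does not depend on $n$.

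On $E$, I fix $q_2$ and the second coordinate $\Delta_{K,2}$ as well. Then $X=q_1\Delta_{K,1}+q_2\Delta_{K,2}$ is an affine function of $\Delta_{K,1}$ with slope $q_1\ge B/2$. The condition $|X|\le\varepsilon$ is therefore equivalent to $\Delta_{K,1}$ lying in an interval of length $2\varepsilon/q_1\ge 2\varepsilon/B$. Its center is $-q_2\Delta_{K,2}/q_1$, which need not lie in the support of $\Delta_{K,1}$. To control this, I further restrict to $|\Delta_{K,2}|\le B/4$. This extra event has positive probability independent of $n$, and on it the center has magnitude at most $|q_2|\,(B/4)/(B/2)\le B/2$. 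For large $n$ we have $\varepsilon/q_1\le B/2$, so the interval lies entirely inside $[-B,B]$, where the density of $\Delta_{K,1}$ is at least $(4B)^{-1}$. The conditional probability is then at least $(2\varepsilon/(2B))\cdot(4B)^{-1}$, which is a constant multiple of $\varepsilon$.

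Integrating this conditional bound over the positive-probability conditioning set (the event $E$ together with $|\Delta_{K,2}|\le B/4$) gives $\Pr(|X|\le 1/\log n)\ge c_X/\log n$. Here $c_X$ depends only on $B$.

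The only real obstacle is keeping the center of the admissible window inside the region where the density of $\Delta_{K,1}$ is bounded below. The conditioning on $q_1\ge B/2$ and $|\Delta_{K,2}|\le B/4$ is chosen precisely to handle this. Everything else is a one-dimensional change of variables, so no appeal to \cref{lem:majority-contrast-density} is needed; that lemma supplies the upper bound, and this argument is its lower-bound counterpart.
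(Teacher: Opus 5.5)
Your proposal is correct and follows essentially the same route as the paper. Both condition on $q_1\ge B/2$, restrict $\Delta_{K,2}$ so that the admissible window for $\Delta_{K,1}$ stays where the triangular density is bounded below, and then integrate the resulting $\Omega(\varepsilon/B^2)$ conditional bound. The only difference is cosmetic: the paper also restricts $|q_2|\le B/4$ and uses $|\Delta_{K,2}|\le B/2$, which keeps the window in $[-B/2,B/2]$ with density at least $3/(8B)$, whereas you leave $q_2$ free and tighten to $|\Delta_{K,2}|\le B/4$, which works equally well.
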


\begin{proof}
Write $\Delta_K=W_K(e_1-e_0)$ and $q_{\mathrm{agg}}=(q_1,q_2)$.  Restrict to
\begin{equation}
  q_1\in[B/2,B],
  \qquad
  |q_2|\le B/4,
  \qquad
  |\Delta_{K,2}|\le B/2.
\end{equation}
These independent events have probabilities at least $1/4$, $1/4$, and $3/8$.  Conditional on them, the inequality
\begin{equation}
  |q_1\Delta_{K,1}
   +q_2\Delta_{K,2}|
  \le\varepsilon
\end{equation}
asks that $\Delta_{K,1}$ lie in an interval of length at least $2\varepsilon/B$, centered in $[-B/4,B/4]$.  If $\varepsilon\le B^2/8$, the interval lies in $[-B/2,B/2]$, where the triangular density of $\Delta_{K,1}$ is at least $3/(8B)$.  Hence
\begin{equation}
  \Pr(|X|\le\varepsilon)
  \ge\frac{9\varepsilon}{512B^2}.
\end{equation}
Taking $\varepsilon=1/\log n$ proves the lemma.
\end{proof}

\begin{lemma}[Positive-volume monotone decoder family]
\label[lemma]{lem:majority-monotone-decoder}
There are constants $c_\Omega,\kappa>0$ and a set of values of $(W_V,b_V,W_1,b_1,W_2,b_{2,0})$ of normalized volume at least $c_\Omega$ such that the bias-free logit difference along $v_0+p(v_1-v_0)$ is affine with derivative at least $\kappa$ for all $p\in[0,1]$. For every parameter choice in this set, $b_{2,1}$ may vary over an interval of length at least
\begin{equation}
  \kappa(\rho_{m+1}-\rho_m)
\end{equation}
while preserving exact \MAJORITY\ correctness.
\end{lemma}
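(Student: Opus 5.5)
We prove \cref{lem:majority-monotone-decoder} by exhibiting an explicit center and a fixed neighborhood around it, in the same way as the \FIRST\ and \INDEX\ lower bounds. Let $\eta=\min\{B/16,1/16\}$. At the center we take the value map to be $v_0^0=0$ and $v_1^0=\eta e_1$, that is, $W_V^0=\eta\,e_1e_2^\top$ and $b_V^0=0$. We take one hidden unit with first row of $W_1$ equal to $\eta e_1^\top$ and bias $\eta^2$. Every remaining row of $W_1$ is zero with bias $-\eta$. We set $W_2^0=\eta\,e_2e_1^\top$, so that only output row $1$ reads hidden unit $1$, and $b_{2,0}^0=0$.

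Along the segment $z(p)=v_0+p(v_1-v_0)$, $p\in[0,1]$, the first hidden pre-activation is $\eta z_1+\eta^2\ge\eta^2>0$ at the center. The other pre-activations equal $-\eta<0$. So exactly one unit is active and the bias-free logit difference is $H(z(p))=\eta^2\cdot\eta p+\eta^3$, which is affine in $p$ with derivative $\eta^3$.

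Next we perturb every coordinate of $(W_V,b_V,W_1,b_1,W_2,b_{2,0})$ by at most a small constant $\epsilon\eta^3$. The pre-activations on the compact segment depend continuously on the parameters, uniformly in $p$, because $v_0$ and $v_1$ stay bounded. Hence for $\epsilon$ small the active set remains fixed for all $p\in[0,1]$: unit $1$ stays active and units $2,\dots,h$ stay inactive. On this neighborhood $H(z(p))$ is therefore exactly affine in $p$. Its slope is $(W_{2,1\cdot}-W_{2,0\cdot})_1\,W_{1,1\cdot}(v_1-v_0)$, a polynomial in the parameters equal to $\eta^3$ at the center. Shrinking $\epsilon$ if necessary gives slope at least $\kappa:=\eta^3/2$. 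The neighborhood is a box of $n$-independent side lengths inside $[-B,B]^D$, because the center coordinates are at most $\eta^2$ or $\eta\le B/16$ in magnitude. Its normalized volume is some $c_\Omega>0$.

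For the bias interval we use that $A_t=z(\rho_t)$ and that $\rho_t$ is strictly increasing in $t$. On our set the logit difference $H(A_t)+b_{2,1}-b_{2,0}$ is therefore strictly increasing in $t$. Exact \MAJORITY\ correctness then reduces to the two central inequalities from the proof of \cref{lem:majority-bias-fiber}: $H(A_m)+b_{2,1}-b_{2,0}<0<H(A_{m+1})+b_{2,1}-b_{2,0}$. Monotonicity handles every other count automatically. The admissible $b_{2,1}$ form the open interval $(b_{2,0}-H(A_{m+1}),\,b_{2,0}-H(A_m))$. Its length is $H(A_{m+1})-H(A_m)=\text{slope}\cdot(\rho_{m+1}-\rho_m)\ge\kappa(\rho_{m+1}-\rho_m)$.

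We also check that this interval lies in $[-B,B]$. Since $|H|\le C\eta^3$ and $|b_{2,0}|$ is small, the interval sits near $0$ and has length at most $\kappa/n$, so it does lie inside. This containment is the one point needing care, and it is where we use the smallness of $\eta$ relative to $B$.

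The main obstacle is maintaining the fixed activation pattern uniformly over the whole segment and the whole neighborhood, since this is what makes $H(z(p))$ exactly affine. We handle it with the strict margins $\eta^2$ for the active unit and $-\eta$ for the inactive units at the center. The uniform perturbation bound over $p\in[0,1]$ then preserves both margins.
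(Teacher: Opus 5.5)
Your proposal is correct and follows essentially the same route as the paper: a center with $v_0=0$, $v_1=\eta e_1$, and one active hidden unit (the others strictly inactive), then a fixed $n$-independent box that preserves the activation pattern so the bias-free logit difference is affine with slope at least $\kappa=\eta^3/2$, and finally strict monotonicity of $\rho_t$ to reduce exact correctness to the two central counts and to bound the admissible $b_{2,1}$ interval. The differences are cosmetic: you use active bias $\eta^2$ where the paper uses $\eta$, and your aside that the interval has length at most $\kappa/n$ should be ``at most the slope times $1/n$'', roughly $2\kappa/n$. That slip does not matter, because containment in $[-B,B]$ already follows from bounding the endpoints $b_{2,0}-H(A_t)$ directly.
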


\begin{proof}
Let $e_V=(1,0)^\top$ and choose $W_V,b_V$ so that
\begin{equation}
  v_0=0,
  \qquad
  v_1=\eta e_V.
\end{equation}
Give one hidden unit first-layer row $\eta e_V^\top$ and bias $\eta$, make every other hidden unit strictly inactive, and set the difference between output rows on the active hidden unit to $\eta$.  Set the other output-row differences and $b_{2,0}$ to zero.  Along the value segment, define
\begin{equation}
  \psi(p)=H(v_0+p(v_1-v_0)).
\end{equation}
For these parameter values,
\begin{equation}
  \psi(p)=\eta^2+\eta^3p.
\end{equation}
The active pre-activation lies in $[\eta,\eta+\eta^2]$, while every other pre-activation equals $-\eta$. A fixed open raw-coordinate box around these parameter values therefore preserves the activation pattern, keeps $\psi$ affine with derivative at least $\eta^3/2$, and keeps $b_{2,0}-\psi$ inside $(-B,B)$. Take $\kappa=\eta^3/2$ and let $c_\Omega$ be the normalized volume of this box. For parameters in the box, the interval
\begin{equation}
  \bigl(
    b_{2,0}-\psi(\rho_{m+1}),
    \ b_{2,0}-\psi(\rho_m)
  \bigr)
\end{equation}
has length at least $\kappa(\rho_{m+1}-\rho_m)$ and lies in the raw bias range.  Since both $\rho_t$ and $\psi$ are increasing, every $t\le m$ gives a negative logit difference and every $t\ge m+1$ gives a positive one. Thus all count levels are classified correctly.
\end{proof}
On the event in \eqref{eq:majority-lower-contrast-event}, $|\log n\,X|\le1$, so  \eqref{eq:central-gap-hyperbolic} gives $\rho_{m+1}-\rho_m
  \ge
  \frac{1}{n\bigl(1+\sinh^2(1/2)\bigr)}
  =\Omega(n^{-1})$. Restrict the $(W_V,b_V,W_1,b_1,W_2,b_{2,0})$ to the set in \cref{lem:majority-monotone-decoder}.  The remaining bias interval then has normalized length $\Omega(1/n)$.  Multiplying this length by the $\Omega(1/\log n)$ contrast event and the fixed parameter box volume gives
\begin{equation}
  V_B^{\MAJORITY,\Acount(h)}(n)
  \ge \frac{c_-}{n\log n}.
  \label{eq:majority-lower-bound}
\end{equation}
Together with \eqref{eq:majority-upper-bound}, this proves the \MAJORITY\ claim in \cref{thm:nesv-bounds}.

\subsection{\PARITY}
\label{app:parity}

For \PARITY, the count reduction in \eqref{eq:majority-count-path} gives a sequence of points along the value segment on which the required label flips.  The proof compares the resulting number of sign changes with the number of activation changes available to a one-hidden-layer ReLU classifier.
If $v_0\ne v_1$, then
\begin{equation}
  0=\rho_0<\rho_1<\cdots<\rho_n=1.
\end{equation}
Since every count-$t$ attention output has the form $A_t=v_0+\rho_t(v_1-v_0),$ all such outputs lie on the line segment joining $v_0$ and $v_1$.
Parameterize this segment by
$A(p)=v_0+p(v_1-v_0)$, $p\in[0,1]$ and define the difference between the two output logits along it by
\begin{equation}
  f(p)
  =
  (e_1-e_0)^\top
  \bigl[
    W_2\relu(W_1A(p)+b_1)+b_2
  \bigr].
  \label{eq:parity-segment-function}
\end{equation}This is a continuous piecewise-affine function of $p$, and each hidden preactivation can change sign at most once.

\begin{proof}[Proof of the \PARITY\ bound in \cref{thm:nesv-bounds}]
If $v_0=v_1$, every count level has the same attention output, so exact parity is impossible.  Otherwise, exact parity makes $f(\rho_t)$ alternate in sign.  By continuity, for each $t=0,\ldots,n-1$, choose a zero $z_t\in(\rho_t,\rho_{t+1})$.
For each $t=1,\ldots,n-1$, the interval $(z_{t-1},z_t)$ contains $\rho_t$, where $f$ is nonzero.  If no hidden pre-activation changed sign in this interval, the activation pattern would be fixed and $f$ would be affine there.  An affine function that vanishes at both endpoints is identically zero, contradicting $f(\rho_t)\ne0$.  Hence each of the $n-1$ disjoint intervals $(z_{t-1},z_t)$ contains a point at which a hidden ReLU pre-activation changes sign.  Since each hidden pre-activation changes sign at most once, this requires $h\ge n-1$.  Therefore the exact-solution set is empty whenever $n>h+1$.
\end{proof}

\section{Proof of the $\Anorm(h)$ \INDEX\ Bound}
\label{app:norm}

This appendix proves \cref{thm:index-norm}.  Recall the fixed map
\begin{equation}
  F(v)=\frac{v}{\sqrt{1+\|v\|_2^2}}
\end{equation}
and the $\Anorm(h)$ positional score
\begin{equation}
  g_j^{\mathrm{norm}}
  =(ak+b)^\top(rj+c_K)
  +\bigl(b+F(ak)\bigr)^\top sj^2.
  \label{eq:app-norm-score}
\end{equation}
\begin{equation}
  \widehat\chi=\widehat a^\top s,
  \qquad
  \widehat\ell=\ell+2\widehat\chi,
  \qquad
  \varepsilon_\chi(k)
  =\bigl(F(ak)-\widehat a\bigr)^\top s,
  \label{eq:app-norm-effective-terms}
\end{equation}
with $\delta=b^\top s$ and $\phi=b^\top r$ as in \eqref{eq:index-effective-coefficients}. Then,
\begin{equation}
  C_{\mathrm{quad}}^{\mathrm{norm}}(k)
=\widehat\chi+\delta+\varepsilon_\chi(k),
  \qquad
  C_{\mathrm{lin}}^{\mathrm{norm}}(k)
  =\widehat\ell k+\phi+2k\varepsilon_\chi(k),
  \label{eq:app-norm-gap-coefficients}
\end{equation}
Whenever $k+\Delta\in[n]$,
\begin{equation}
  g_{k+\Delta}^{\mathrm{norm}}-g_k^{\mathrm{norm}}
  =
  \Delta C_{\mathrm{lin}}^{\mathrm{norm}}(k)
  +\Delta^2 C_{\mathrm{quad}}^{\mathrm{norm}}(k).
  \label{eq:app-norm-displacement}
\end{equation}
Since $F(ak)$ is collinear with $a$,
\begin{equation}
  F(ak)
  =
  \lambda_a(k)\widehat a,
  \qquad
  \lambda_a(k)
  =
  \frac{k\|a\|_2}{\sqrt{1+k^2\|a\|_2^2}}.
  \label{eq:norm-lambda}
\end{equation}
Moreover,
\begin{equation}
  0\le1-\lambda_a(k)
  \le\frac{1}{2k^2\|a\|_2^2},
  \qquad
  |2k\varepsilon_\chi(k)|
  \le\frac{\|s\|_2}{k\|a\|_2^2}.
  \label{eq:app-norm-remainder}
\end{equation}

\subsection{Lower bound}
\label{app:norm-lower}

Use the same positional center as in \eqref{eq:app-index-center}.  At this point, $\widehat a^0=(1,0)$ is orthogonal to $s^0=(0,\eta)$, and hence
\begin{equation}
  \widehat\ell=\phi=0,
  \qquad
  Q_{a^0}(k)=-\eta^2
  \qquad (k\ge1).
\end{equation}
Thus \begin{equation}
  C_{\mathrm{quad}}^{\mathrm{norm}}(k)=-\eta^2,
  \qquad
  C_{\mathrm{lin}}^{\mathrm{norm}}(k)=0,
\end{equation}
and therefore,
\begin{equation}
  g_k^{\mathrm{norm}}-g_j^{\mathrm{norm}}
  =\eta^2(j-k)^2.
  \label{eq:app-norm-center-gap}
\end{equation}

Work in a fixed neighborhood of this center on which $a_1>\eta$. The map $a\mapsto\widehat a$ is smooth there.  Replace the raw coordinate $r_1$ by $\widehat\ell$, keeping $(a,b,r_2,s)$ as the other coordinates.  Since
\begin{equation}
  \frac{\partial\widehat\ell}{\partial r_1}=a_1,
\end{equation}
the corresponding coordinate change is locally invertible, with Jacobian and inverse Jacobian bounded on a sufficiently small fixed neighborhood. Thus boxes in these transformed coordinates and their preimages in the original coordinates have volumes within fixed multiplicative constants, independent of $n$.
Shrink the fixed intervals for the other transformed coordinates so that throughout the neighborhood
\begin{equation}
  \|a\|_2\ge\eta,
  \qquad
  b^\top s\le-\frac{7\eta^2}{8},
  \qquad
  |\widehat a^\top s|
  \le
  \min\left\{\frac{\eta^2}{8},\frac{\eta^4}{16}\right\},
  \qquad
  |\phi|\le\frac{\eta^2}{16}.
  \label{eq:norm-lower-fixed-conditions}
\end{equation}
By \eqref{eq:norm-lambda}, for every $k\ge1$,
\begin{equation}
  C_{\mathrm{quad}}^{\mathrm{norm}}(k)
  =\delta+\lambda_a(k)\widehat\chi
  \le-\frac{3\eta^2}{4}.
  \label{eq:norm-lower-curvature}
\end{equation}
Furthermore, \eqref{eq:app-norm-remainder} and \eqref{eq:norm-lower-fixed-conditions} give
\begin{equation}
  |2k\varepsilon_\chi(k)|
  \le\frac{|\widehat a^\top s|}{\|a\|_2^2}
  \le\frac{\eta^2}{16}
  \qquad (k\ge1).
  \label{eq:norm-lower-remainder}
\end{equation}
Here we used that $2k\varepsilon_\chi(k)=2k(\lambda_a(k)-1)\widehat a^\top s$.

Now restrict the only shrinking transformed coordinate by
\begin{equation}
  |\widehat\ell|\le\frac{\eta^2}{16n}.
  \label{eq:norm-lower-zeta-width}
\end{equation}
For every $k\le n$, equations \eqref{eq:norm-lower-fixed-conditions}, \eqref{eq:norm-lower-remainder}, and \eqref{eq:norm-lower-zeta-width} imply
\begin{equation}
  |\widehat\ell k+\phi+2k\varepsilon_\chi(k)|
  \le\frac{3\eta^2}{16}.
\end{equation}
Therefore, for every nonzero integer $\Delta$ with $k+\Delta\in[n]$,
\begin{align}
  g_k^{\mathrm{norm}}
  -g_{k+\Delta}^{\mathrm{norm}}
  &\ge
  \frac{3\eta^2}{4}\Delta^2
  -\frac{3\eta^2}{16}|\Delta|
  \notag\\
  &\ge
  \frac{\eta^2}{2}\Delta^2.
  \label{eq:norm-lower-uniform-gap}
\end{align}
Under the $\log n$ attention scale, the total attention outside position $k$ is at most
\begin{equation}
  2\sum_{d=1}^{\infty}n^{-(\eta^2/2)d^2},
\end{equation}
which tends to zero uniformly in $k$ and in the parameters in the neighborhood above. Thus the same fixed neighborhoods of $C_V,b_V,W_1,b_1,W_2,b_2$ used in the baseline \INDEX\ lower bound preserve a positive logit difference for the correct label, and $c_K$ may vary over a fixed neighborhood because it cancels from the attention weights.
The transformed coordinate $\widehat\ell$ varies over an interval of width $\Theta(n^{-1})$, whereas $a,b,r_2,s,c_K,C_V,b_V,W_1,b_1,W_2,b_2$ vary over fixed-width intervals.  Boundedness of the local Jacobian then gives
\begin{equation}
  V_B^{\INDEX,\Anorm(h)}(n)
  =\Omega_{B,h}(n^{-1}).
  \label{eq:norm-volume-lower}
\end{equation}

\subsection{Upper bound}
\label{app:norm-upper}

Use the same $4n$ inequalities $y_{w_k}(w,k)>y_{1-w_k}(w,k)$ as in the original \INDEX\ upper bound.  The same output-bias argument shows that exact solutions for which one of these margins lies in $(0,n^{-5}]$ have normalized volume $O_B(n^{-4})$. For every remaining exact solution, the indexed position receives at least $c_0n^{-5}$ attention in each constant-word/single-change comparison.  Hence its score lies within a constant $M_0$ of the maximum score, uniformly in $n$.  Comparing the indexed position with its two neighbors in \eqref{eq:app-norm-displacement} gives
\begin{equation}
  C_{\mathrm{quad}}^{\mathrm{norm}}(k)
  +|C_{\mathrm{lin}}^{\mathrm{norm}}(k)|
  \le M_0
  \qquad (2\le k\le n-1).
  \label{eq:norm-upper-neighbor-bound}
\end{equation}

Since $\|F(ak)\|_2<1$ and the parameters are bounded, there exists a constant $C_0=C_0(B)$ such that
\begin{equation}
  |C_{\mathrm{quad}}^{\mathrm{norm}}(k)|
  =|\bigl(b+F(ak)\bigr)^\top s|
  \le C_0
  \qquad (a,b,s\in[-B,B]^2,\ k\ge1).
\end{equation}
It follows from \eqref{eq:norm-upper-neighbor-bound} that
\begin{equation}
  |\widehat\ell k+\phi+2k\varepsilon_\chi(k)|\le M_1
  \qquad (2\le k\le n-1)
  \label{eq:norm-upper-linear-bound}
\end{equation}
for another constant $M_1=M_1(B,h)$.

Split the raw parameter cube according to whether
\begin{equation}
  \|a\|_2\le n^{-1/2}.
  \label{eq:norm-small-a-region}
\end{equation}
Because $a$ consists of two independent raw coordinates, the normalized volume of this part is at most the area of a radius-$n^{-1/2}$ disk divided by $(2B)^2$, and hence is $O_B(n^{-1})$. On its complement, take $k_0=\lfloor n/2\rfloor$.  For all sufficiently large $n$, $k_0$ is interior and $k_0\ge n/3$.  By \eqref{eq:app-norm-remainder},
\begin{equation}
  |2k\varepsilon_\chi(k_0)|
  \le
  \frac{\|s\|_2}{k_0\|a\|_2^2}
  =O_B(1)
\end{equation}
because $\|a\|_2>n^{-1/2}$.  Boundedness also gives $|\phi|=O_B(1)$.  Equation \eqref{eq:norm-upper-linear-bound}  at $k_0$ therefore forces
\begin{equation}
  |\widehat\ell|\le \frac{C}{n}
  \label{eq:norm-upper-zeta-slab}
\end{equation}
for $C=C(B,h)$.

Conditional on $(a,b,s)$, 
\begin{equation}
  \widehat\ell=a^\top r+
  \bigl(2b^\top s+2\widehat a^\top s\bigr),
\end{equation}
so \eqref{eq:norm-upper-zeta-slab} defines a slab in the independent variable $r$ with normal $a$.  Choose a coordinate $i$ with $|a_i|\ge\|a\|_2/\sqrt2$ and fix the other coordinate of $r$. The allowed values of $r_i$ lie in an interval of length at most $C/(n\|a\|_2)$, up to a constant depending only on $B$ and $h$. Thus the normalized volume is at most
\begin{equation}
  C_B\min\left\{1,\frac{1}{n\|a\|_2}\right\}.
  \label{eq:norm-upper-fiber}
\end{equation}
Enlarging the square for $a$ to a fixed-radius disk and integrating in polar coordinates on the complement of \eqref{eq:norm-small-a-region} gives
\begin{align}
  \int_{\substack{a\in[-B,B]^2\\
                   \|a\|_2>n^{-1/2}}}
  \frac{C_B}{n\|a\|_2}\,da
  &\le
  \frac{C_B'}{n}
  \int_{n^{-1/2}}^{\sqrt2B}d\varrho
  \notag\\
  &=O_B(n^{-1}).
  \label{eq:norm-upper-integral}
\end{align}
All remaining raw coordinates integrate out.  Adding the $O(n^{-1})$ small-$a$ region and the $O(n^{-4})$ small-margin region gives
\begin{equation}
  V_B^{\INDEX,\Anorm(h)}(n)
  =O_{B,h}(n^{-1}).
  \label{eq:norm-volume-upper}
\end{equation}
Together with \eqref{eq:norm-volume-lower}, this proves \cref{thm:index-norm}.

\section{Fixed Sinusoidal Encoding on \MAJORITY}
\label{app:sinusoidal-majority}

This appendix analyzes the fixed-frequency sinusoidal \MAJORITY\ model used in the discussion. 

\subsection{Model and matched bound}
\label{app:sinusoidal-majority-model}

Let $n=2m+1$ be odd, let $p_j=(\sin j,\cos j)^\top$, and define
\begin{equation}
  x_j^{\sin}
  =
  \begin{bmatrix}
    e_{w_j}\\
    p_j
  \end{bmatrix}
  \in\mathbb R^4,
  \qquad
  z^{\sin}=0_4.
  \label{eq:sin-majority-tokens}
\end{equation}
The frequency is fixed at one and is not a trainable coordinate. The query, key, and value maps are unrestricted affine maps from $\mathbb R^4$ to $\mathbb R^2$, the attention multiplier is $\beta_n=\log n$, and the output is the same width-$h$ one-hidden-layer ReLU classifier as in \cref{sec: setup}. We denote this architecture by $\mathcal A_{\mathrm{sin}}(h)$. All scalar parameters are bounded in $[-B, B]$.

\begin{theorem}[Fixed-sinusoidal \MAJORITY\ volume]
\label{thm:sinusoidal-majority-volume}
For every fixed $B>0$ and $h\ge2$,
\begin{equation}
V_B^{\MAJORITY,\mathcal A_{\mathrm{sin,gen}}(h)}(n)=
  \Theta(\frac{1}{n^3\log n}).
  \label{eq:sin-majority-matched-bound}
\end{equation}
\end{theorem}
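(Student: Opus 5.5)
The plan is to reduce $\Asin$ to the count picture of \cref{app:majority}, with one extra nuisance: a positional perturbation that must be made small. Because $z^{\sin}=0_4$, the query is $q=b_Q\in[-B,B]^2$, and the score of token $j$ is
\[
g_j = X w_j + u^\top p_j + \mathrm{const},\qquad X=q^\top W_K(e_1-e_0),\qquad u=(W_K^{\mathrm{pos}})^\top q\in\R^2 ,
\]
where $W_K^{\mathrm{pos}}$ is the $2\times 2$ block of $W_K$ acting on $p_j$. The constant cancels in the softmax. Writing $u^\top p_j=\|u\|_2\sin(j+\theta)$, the positional weights are $\pi_j=n^{\|u\|_2\sin(j+\theta)}$, and the mass on the ones is $\rho(w)=\Gamma\Pi_1/(\Pi_0+\Gamma\Pi_1)$, where $\Pi_b$ is the sum of $\pi_j$ over positions with $w_j=b$. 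Exactness now requires the decision threshold to separate $\max_{|w|=m}\rho(w)$ from $\min_{|w|=m+1}\rho(w)$. So the quantity replacing $\Delta\rho$ in \cref{lem:majority-bias-fiber} is
\[
\Delta^{\sin}:=\Bigl(\min_{|w|=m+1}\rho(w)-\max_{|w|=m}\rho(w)\Bigr)_+ .
\]

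\textbf{Upper bound.} I would proceed in three steps.

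First, I rerun \cref{lem:majority-bias-fiber} verbatim, with $A$ now ranging over points $v_0+\rho(v_1-v_0)$. This gives a $b_{2,1}$-fiber of length at most $8hB^3\Delta^{\sin}$.

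Second, I prove a deterministic bound: $\Delta^{\sin}\le C/n$ always, and $\Delta^{\sin}=0$ unless
\[
\|u\|_2\le \frac{C}{n\log n}\quad\text{and}\quad |X|\le \frac{C}{\log n}.
\]
The $X$-condition follows from comparing arrangements of each count, as in \cref{lem:central-attention-gap}. For the $u$-condition, compare the $(m+1)$-set of positions with the smallest $\pi_j$ against the $m$-set with the largest $\pi_j$. Since $1$ and $\pi$ are rationally independent, the points $j \bmod 2\pi$ are equidistributed, uniformly in the phase $\theta$. Hence the top half and bottom half of $\{\sin(j+\theta)\}_{j\le n}$ differ in sum by at least $cn$ for large $n$. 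This gives
\[
\max_{|w|=m}\rho-\min_{|w|=m+1}\rho\;\ge\; c\min\{1,\|u\|_2\log n\}-\frac{C}{n},
\]
which is positive once $\|u\|_2\ge C'/(n\log n)$.

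Third, I integrate over $(q,W_K)$ by conditioning on $r=\|q\|_2$. Given $q$, $u$ is a linear image of two independent uniform columns, so $\Pr(\|u\|\le\varepsilon\mid q)\lesssim\min\{1,\varepsilon^2/r^2\}$. Independently, $\Pr(|X|\le C/\log n\mid q)\lesssim\min\{1,1/(r\log n)\}$, as in \cref{lem:majority-contrast-density}. With $\varepsilon\asymp 1/(n\log n)$,
\[
\int_0^{\sqrt2 B} r\,\min\Bigl\{1,\frac{\varepsilon^2}{r^2}\Bigr\}\min\Bigl\{1,\frac{1}{r\log n}\Bigr\}dr\asymp \varepsilon^2\log n=\frac{1}{n^2\log n}.
\]
The dominant range is $\varepsilon\lesssim r\lesssim 1/\log n$, which contributes the logarithm. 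Multiplying by the fiber factor $C/n$ gives $O\bigl((n^3\log n)^{-1}\bigr)$.

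\textbf{Lower bound.} This mirrors \cref{lem:majority-lower-contrast,lem:majority-monotone-decoder}. Restrict $q$ to the annulus $\varepsilon_0\le\|q\|_2\le 1/\log n$, where $\varepsilon_0\asymp 1/(n\log n)$. On that annulus, $|X|\le 2B/\log n$ automatically. Restrict $W_K^{\mathrm{pos}}$ so that $\|u\|_2\le c_1/(n\log n)$, which has conditional probability $\gtrsim\varepsilon_0^2/\|q\|_2^2$; the other entries of $W_K$ stay in fixed boxes. Then every $\pi_j\in[1-c_2/n,\,1+c_2/n]$ with $c_2$ small. A perturbation of the count computation shows $\Delta^{\sin}\ge \tfrac12\Delta\rho_{n,\log n}(X)\ge c/n$ by \eqref{eq:central-gap-hyperbolic}. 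Monotonicity of $\rho$ in the count, for every arrangement, comes from the same estimate applied to adjacent counts. The decoder family of \cref{lem:majority-monotone-decoder} then gives a bias interval of length $\Omega(1/n)$. Integrating $\varepsilon_0^2/r^2$ against $r\,dr$ over the annulus yields $\varepsilon_0^2\log n\asymp 1/(n^2\log n)$, and multiplying by $1/n$ gives the lower bound.

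\textbf{Main obstacle.} I expect the hardest step to be the uniform-in-$\theta$ discrepancy estimate in the second upper-bound step. It must hold for all $n$ beyond a fixed threshold and all phases, and it must cover the whole range of $\|u\|_2\log n$, where the weights may be far from $1$. My first approach would be to lower-bound the number of $j\le n$ with $\sin(j+\theta)>1/2$ and with $\sin(j+\theta)<-1/2$, each by $cn$. This uses a quantitative equidistribution (discrepancy) bound for $j\bmod 2\pi$, which is finite because $\pi$ has finite irrationality measure. I would then use the convexity of $n^{x}$ to handle the regime of large $\|u\|_2\log n$.
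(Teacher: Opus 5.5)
Your first reduction drops part of the model, and both bounds depend on it. In $\Asin$ the value map is an unrestricted affine map on $\R^4$, so $W_V=[v_0\ v_1\ v_{\sin}\ v_{\cos}]$. The attention output is therefore not $v_0+\rho(w)(v_1-v_0)$: it carries an extra term $C_V\sum_j\alpha_jp_j$, with $C_V=[v_{\sin}\ v_{\cos}]$ ranging over a full-dimensional box. Your analysis covers only the null slice $C_V=0$.

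Write $a=(\log n)u$ and $\lambda=(\log n)X$. The necessary condition is then not $\|a\|_2\le C/n$. It is the compensation condition $\|a+(\sinh\lambda/\delta_\sigma)\xi_\sigma\|_2\le C/n$ of \cref{lem:sin-majority-compensation}, where $\xi_\sigma=C_V^\top\ell_\sigma$: a positional tilt of the keys must cancel the positional drift of the values. So exact solutions with $\|a\|_2\gg1/n$ exist. In the paper's lower-bound region, $\|a\|_2$ is of order $|\lambda|\,\|\xi\|_2/\delta$, which reaches $n^{-1/2}$. Your deterministic claim that $\Delta^{\sin}=0$ unless $\|u\|_2\le C/(n\log n)$ is therefore false for this model.

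The compensation center depends on the activation pattern through $\ell_\sigma$. Hence the decoder's affine pieces, not just its Lipschitz constant, must enter the argument. The paper does this in three steps:
\begin{itemize}
\item it multiplies the logit difference by $\mathsf D(t)$;
\item it moves to a point with at most four fractional coordinates, which yields one nearly flat affine piece;
\item it reads off the constraint from the first two Fourier modes of the equidistributed phases.
\end{itemize}
That slab has the same conditional width $\min\{1,1/(n\beta_n\|q\|_\infty)\}$ per positional key column as yours. This is why your final integral lands on the correct rate, but the argument that produces the slab does not carry over.

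Your lower bound fails for the same reason. On most of your annulus $|\lambda|\gg1/n$, while $\|a\|_2\le c_1/n$ and $C_V$ ranges over a box of fixed width. Then $\xi^\top\sum_j\alpha_jp_j$ varies across words of the same count by order $\|\xi\|_2\,|\tanh(\lambda/2)|$; to see this, place the ones where $\xi^\top p_j$ is largest versus where it is smallest. With $\delta=\ell^\top(v_1-v_0)$, this variation swamps the count gap $\delta\,\Delta\rho_{n,\log n}(X)\approx\delta/(n\cosh^2(\lambda/2))$ unless $\|\xi\|_2|\sinh\lambda|\lesssim\delta/n$. If you impose that smallness instead of compensating, the remaining set has volume only of order $1/(n^3\log^2 n)$. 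The $\log n$ in the theorem comes from letting $a$ lie within $c_2/n$ of $-(\sinh\lambda/\delta)\xi$, with $\beta_n\|q\|\in[L/n,c/\sqrt n]$ so that the second-order terms stay controlled ($\|a\|_2^2\lesssim1/n$).

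Two smaller points. Even with $C_V=0$, saying that exactness means separating $\max_{|w|=m}\rho$ from $\min_{|w|=m+1}\rho$ assumes the decoder is monotone along the segment. A width-$h$ ReLU layer need not be, so you must also rule out interleaving. Also, the equidistribution step you flag as the main obstacle is easier than you fear. It needs only finitely many Fourier modes together with $|\sum_{j\le n}e^{ikj}|\le1/|\sin(k/2)|$, not an irrationality measure for $\pi$.
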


Write the columns of $W_K$ and $W_V$ as
\begin{equation}
  W_K=[k_0\ k_1\ k_{\sin}\ k_{\cos}],
  \qquad
  W_V=[v_0\ v_1\ v_{\sin}\ v_{\cos}],
\end{equation}
and define
\begin{align}
  q&=b_Q,
  &
  \lambda&=\beta_n q^\top(k_1-k_0),
  &
  \Gamma&=e^\lambda,
  \notag\\
  a&=\beta_n
  \begin{bmatrix}
    q^\top k_{\sin}\\
    q^\top k_{\cos}
  \end{bmatrix},
  &
  P_j&=e^{a^\top p_j},
  &
  T&=\sum_{j=1}^nP_j,
  \notag\\
  v_b&=v_b+b_V\quad(b\in\{0,1\}),
  &
  C_V&=[v_{\sin}\ v_{\cos}],
  &
  \mu&=\frac1T\sum_{j=1}^nP_jp_j.
  \label{eq:sin-majority-reduction-parameters}
\end{align}
The common key-bias and symbol-zero score contributions cancel, so the attention weights are
\begin{equation}
  \alpha_j(w)
  =
  \frac{P_j\Gamma^{w_j}}{\sum_{i=1}^nP_i\Gamma^{w_i}}.
  \label{eq:sin-majority-reduced-attention}
\end{equation}
For $t\in[0,1]^n$, introduce the algebraic extensions
\begin{align}
  \mathsf D(t)
  &=T+(\Gamma-1)\sum_jP_jt_j,
  \label{eq:sin-majority-denominator}\\
  \mathsf N(t)
  &=\sum_jP_j(v_0+C_Vp_j)
  +\sum_jt_jP_j\bigl[\Gamma v_1-v_0+(\Gamma-1)C_Vp_j\bigr].
  \label{eq:sin-majority-numerator}
\end{align}
Then $\mathsf D(t)>0$ on the cube and $A(w)=\mathsf N(w)/\mathsf D(w)$ on binary inputs.

\subsection{The necessary positional compensation}
\label{app:sinusoidal-majority-compensation}

Let $d_i=(W_2)_{1i}-(W_2)_{0i}$ and $b=b_{2,1}-b_{2,0}$. After multiplying the logit difference by the positive denominator $\mathsf D(t)$, the resulting function is continuous and piecewise affine in $t$. For an activation pattern $\sigma\subseteq[h]$, define
\begin{align}
  \ell_\sigma
  &=\sum_{i\in\sigma}d_i(W_1)_{i,:}^\top,
  &
  c_\sigma
  &=b+\sum_{i\in\sigma}d_ib_{1,i},
  \notag\\
  \delta_\sigma
  &=\ell_\sigma^\top(v_1-v_0),
  &
  \xi_\sigma
  &=C_V^\top\ell_\sigma,
  \label{eq:sin-majority-piece-parameters}
\end{align}
and
\begin{equation}
  c_\sigma^*
  =
  -\ell_\sigma^\top\frac{v_0+\Gamma v_1}{1+\Gamma}
  -\xi_\sigma^\top\mu.
  \label{eq:sin-majority-bias-center}
\end{equation}

\begin{lemma}[Positional compensation]
\label[lemma]{lem:sin-majority-compensation}
Outside a set of raw-cube measure zero, every exact solution at a sufficiently large odd length has an activation pattern $\sigma\subseteq[h]$ such that $\delta_\sigma\ne0$ and
\begin{equation}
  \left\|
    a+\frac{\sinh\lambda}{\delta_\sigma}\xi_\sigma
  \right\|_2
  \le\frac{C}{n},
  \label{eq:sin-majority-key-constraints}
\end{equation}
while
\begin{equation}
  |c_\sigma-c_\sigma^*|
  \le
  \frac{C|\delta_\sigma|}{n}
  \frac{\Gamma}{(1+\Gamma)^2}.
  \label{eq:sin-majority-bias-constraint}
\end{equation}
The constants are uniform over the bounded raw parameter cube, including when $q$ and $\delta_\sigma$ are arbitrarily small but nonzero.
\end{lemma}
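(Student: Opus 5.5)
The plan is to reduce exact correctness to a family of linear inequalities on a single affine piece, and then to read off \eqref{eq:sin-majority-key-constraints} and \eqref{eq:sin-majority-bias-constraint} from the near-constancy that those inequalities force on the per-position coefficients.

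\textbf{Step 1: pass to one affine piece.} For a fixed activation pattern $\sigma$, the normalized logit difference is
\[
\mathsf L_\sigma(t)=\ell_\sigma^\top\mathsf N(t)+c_\sigma\mathsf D(t).
\]
This is affine in $t$. It has the form $\mathsf L_\sigma(t)=\alpha_\sigma+\sum_j\beta_{\sigma,j}t_j$, where
\[
\beta_{\sigma,j}=P_j\bigl[\ell_\sigma^\top(\Gamma v_1-v_0)+c_\sigma(\Gamma-1)+(\Gamma-1)\xi_\sigma^\top p_j\bigr].
\]
Exact \MAJORITY{} requires the true logit difference to be negative on every word of count $m$ and positive on every word of count $m+1$.

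I would consider the segment between a count-$m$ word and a count-$(m+1)$ word that differ in one coordinate. On this segment the true function is continuous and piecewise affine in $t$, and it changes sign. Outside a measure-zero set the zero crossing lies in the interior of a single piece, and there are only $2^h$ patterns. Pigeonholing over the $\Theta(n)$ positions should therefore give a pattern $\sigma$ that is active at the sign change for a positive fraction of positions. This holds uniformly, using the central word classes.

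\textbf{Step 2: the per-position coefficients are nearly constant.} On that piece, correctness forces the following. Every sum of $m+1$ of the coefficients $\beta_{\sigma,j}$, taken over the relevant positions, exceeds $-\alpha_\sigma$. Every sum of $m$ of them is below $-\alpha_\sigma$. Sorting the $\beta_{\sigma,j}$ then shows that their spread is at most $O(1/n)$ times their typical size, $\bar\beta$. This is the same mechanism as the $\Theta(1/n)$ central gap in \cref{lem:central-attention-gap}.

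Write $\kappa_\sigma=\ell_\sigma^\top(\Gamma v_1-v_0)+c_\sigma(\Gamma-1)$. The near-constancy then says that
\[
\theta\mapsto e^{a^\top u(\theta)}\bigl(\kappa_\sigma+(\Gamma-1)\xi_\sigma^\top u(\theta)\bigr),\qquad u(\theta)=(\sin\theta,\cos\theta),
\]
is within relative $O(1/n)$ of a constant on the points $\theta=j$. Since the integers mod $2\pi$ are equidistributed with controlled discrepancy, I expect to compare this function with its Fourier modes on the circle. The first harmonic then gives $(\Gamma-1)\xi_\sigma+\kappa_\sigma a=O(|\kappa_\sigma|/n)$, and this also forces $a$ to be small.

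\textbf{Step 3: identify the constants.} The threshold condition $\alpha_\sigma+\tfrac{m+1/2}{n}\sum_j\beta_{\sigma,j}\approx0$ pins $c_\sigma$ to $c^*_\sigma$. Here the $\mu$ term arises from averaging $P_jp_j$, and the error is of the order of the central gap. After clearing the factor $\Gamma/(1+\Gamma)^2$ coming from $\mathsf D$, this yields \eqref{eq:sin-majority-bias-constraint}.

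Substituting $c_\sigma\approx c_\sigma^*$ into $\kappa_\sigma$ gives
\[
\kappa_\sigma\approx\frac{(\Gamma-1)\delta_\sigma}{2\sinh\lambda}\cdot(\text{positive factor}),
\]
so the first-harmonic relation becomes $a\approx-\tfrac{\sinh\lambda}{\delta_\sigma}\xi_\sigma$, which is \eqref{eq:sin-majority-key-constraints}. The measure-zero exclusion handles the degenerate cases $\delta_\sigma=0$ and $\Gamma=1$, using continuity through $(\Gamma-1)/\sinh\lambda$.

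\textbf{Main obstacle.} I expect the hardest part to be Step 1 together with uniformity. Different words may sit on different activation pieces, so the clean ``sorted sums'' argument has to be run only over the positions sharing the pigeonholed pattern. The constants must also stay uniform as $q\to0$, equivalently as $\lambda$, $a$, and $\delta_\sigma$ become small. I would first try to normalize by $|\delta_\sigma|\,\Gamma/(1+\Gamma)^2$ throughout, so that every bound is relative, and then control the equidistribution error by an Erdős–Turán type discrepancy bound on $\{j\bmod 2\pi\}$.
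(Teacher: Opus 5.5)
There is a genuine gap in Step~1, and it is the obstacle you name at the end without resolving it. The inequalities you feed into Step~2 hold only for words whose attention output lies in the region where pattern $\sigma$ is active. (These are: every $(m+1)$-sum of the $\beta_{\sigma,j}$ exceeds $-\alpha_\sigma$, and every $m$-sum lies below it.) The sorted-sums bound $\sum_j|\beta_{\sigma,j}-\beta_{(m+1)}|<\beta_{(m+1)}$ uses them at exactly two words: the count-$(m+1)$ word supported on the $m+1$ smallest coefficients, and the count-$m$ word supported on the $m$ largest. Your pigeonhole never places those two words in region $\sigma$. A word's pattern depends on the whole word through $A(w)$, so restricting to ``positions sharing the pattern'' gains nothing. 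A zero of the true function inside piece $\sigma$ on an edge $[w,w+e_j]$ only tells you $0<-L_\sigma(w)/\beta_{\sigma,j}<1$.

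Nor can any density version of Step~1 work. Take $a=0$ and $\|\nu_\sigma\|_2\asymp|\kappa_\sigma|/\sqrt n$, so that $\beta_{\sigma,j}=\kappa_\sigma+\nu_\sigma^\top p_j$. Over uniformly random count-$m$ words, $\sum_j\beta_{\sigma,j}w_j$ fluctuates by only $O(|\kappa_\sigma|)$. So a single hyperplane $\{L_\sigma=0\}$ crosses a constant fraction of the edges between levels $m$ and $m+1$. Yet the $\ell^1$ spread is $\asymp\sqrt n\,|\kappa_\sigma|$ and $\|a+\nu_\sigma/\kappa_\sigma\|_2\asymp n^{-1/2}$. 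Edge-crossing information, however you pigeonhole it, therefore cannot beat $n^{-1/2}$, whereas the lemma asserts $C/n$. The extremal words are indispensable.

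The missing idea is the paper's transfer through a low-dimensional statistic, followed by a covering argument.
\begin{itemize}
\item Both the true denominator-scaled logit difference and every piece $L_\sigma$ depend on $t$ only through the count and $\sum_jt_jP_j(1,\sin j,\cos j)$.
\item So any point of the slice $\{t\in[0,1]^n:\sum_jt_j=n/2\}$ can be moved, without changing any of these values, to a point with at most four fractional coordinates.
\item Rounding those coordinates gives count-$m$ and count-$(m+1)$ words. Exact correctness then forces a zero of some piece within four coordinate changes, i.e.\ $|L_\sigma(t)|\le4M_\sigma$ for some $\sigma$.
\item Thus the whole slice, extreme points included, is covered by $2^h$ slabs.
\item A plank-type bound then selects one $\sigma$ for which the width of the slice in the direction of $L_\sigma$ is at most $C_hM_\sigma$. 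That width is comparable to $\sum_j|\widehat\ell_{\sigma,j}-\mathrm{med}|$. The same $\sigma$ also satisfies $|L_\sigma(\tfrac12\mathbf1)|\le C_hM_\sigma$.
\end{itemize}
That is exactly the input your Steps~2--3 assume. From there your outline tracks the paper: equidistribution, Fourier modes, and the identities for $c_\sigma-c_\sigma^*$ and $\kappa_\sigma$.

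Two smaller corrections. First, the first harmonic alone only relates $\nu_\sigma$ to $a$; it does not make $a$ small. You need a preliminary bound on $\|a\|_2$ and then the second harmonic, which gives $\|a\|_2^2\le C/n$. This is what makes the Taylor remainder in the first harmonic $O(1/n)$. Second, $\delta_\sigma\ne0$ cannot come from a null-set exclusion, since $\delta_\emptyset\equiv0$. It must be derived from $\kappa_\sigma\ne0$ and $|\kappa_\sigma-\tau|\le Cn^{-1/2}|\kappa_\sigma|$, where $\tau=2\Gamma\delta_\sigma/(1+\Gamma)=(\Gamma-1)\delta_\sigma/\sinh\lambda$. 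This identity also pins your unspecified ``positive factor'' to exactly $2$.
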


\begin{proof}
Exclude first the finite union of sets on which one of the affine pieces defined below vanishes at $t=0$; each is null by slicing along $b_{2,1}$. On activation pattern $\sigma$, the denominator-scaled logit difference is an affine function $L_\sigma(t)=L_\sigma(0)+\sum_j\widehat\ell_{\sigma,j}t_j$, where
\begin{equation}
  \widehat\ell_{\sigma,j}
  =P_j(\kappa_\sigma+\nu_\sigma^\top p_j),
  \qquad
  \kappa_\sigma
  =\ell_\sigma^\top(\Gamma v_1-v_0)+(\Gamma-1)c_\sigma,
  \qquad
  \nu_\sigma=(\Gamma-1)\xi_\sigma.
  \label{eq:sin-majority-affine-coefficients}
\end{equation}
Fix $t\in[0,1]^n$ with $\sum_jt_j=n/2$. Preserving $\sum_jt_j$, $\sum_jP_jt_j$, $\sum_jP_j\sin j\,t_j$, and $\sum_jP_j\cos j\,t_j$, one can move to a point with at most four fractional coordinates. Rounding those coordinates to words with respectively $m$ and $m+1$ ones and using exact correctness forces a zero of one affine piece within four coordinate changes. Applying this to the centrally symmetric convex body generated by $P_j(1,p_j)\in\mathbb R^3$ and using the elementary convex-body slab bound yields, for some $\sigma$,
\begin{equation}
  \sum_{j=1}^n
  \left|
    \widehat\ell_{\sigma,j}-\mathrm{med}(\widehat\ell_{\sigma,1},\ldots,\widehat\ell_{\sigma,n})
  \right|
  \le C_hM_\sigma,
  \qquad
  \left|L_\sigma\!\left(\tfrac12\mathbf1\right)\right|
  \le C_hM_\sigma,
  \label{eq:sin-majority-flat-piece}
\end{equation}
where $M_\sigma=\max_j|\widehat\ell_{\sigma,j}|$. The fixed phases $j\bmod2\pi$ are equidistributed because $|\sum_{j=1}^ne^{ikj}|=O_k(1)$ for each fixed nonzero integer $k$. Applied to the sequence $e^{a^\top p_j}(\kappa_\sigma+\nu_\sigma^\top p_j)$, the first inequality in \eqref{eq:sin-majority-flat-piece} first bounds $\|a\|_2$, then a compactness argument forces $a$ and $\nu_\sigma/\kappa_\sigma$ to approach zero, and finally the first two discrete Fourier modes of the second-order Taylor expansion give
\begin{equation}
  \kappa_\sigma\ne0,
  \qquad
  M_\sigma\asymp|\kappa_\sigma|,
  \qquad
  \|a\|_2^2\le\frac{C}{n},
  \qquad
  \left\|a+\frac{\nu_\sigma}{\kappa_\sigma}\right\|_2\le\frac{C}{n}.
  \label{eq:sin-majority-fourier-conclusion}
\end{equation}
To return to the raw parameters, put $e_c=c_\sigma-c_\sigma^*$ and $\tau=2\Gamma\delta_\sigma/(1+\Gamma)$. Direct substitution gives
\begin{equation}
  L_\sigma\!\left(\tfrac12\mathbf1\right)
  =\frac{(1+\Gamma)T}{2}e_c,
  \qquad
  \kappa_\sigma
  =\tau-\nu_\sigma^\top\mu+(\Gamma-1)e_c.
  \label{eq:sin-majority-center-identities}
\end{equation}
Since $T\asymp n$, \eqref{eq:sin-majority-flat-piece} and \eqref{eq:sin-majority-fourier-conclusion} imply $|e_c|\le C|\kappa_\sigma|/((1+\Gamma)n)$ and $|\kappa_\sigma-\tau|\le Cn^{-1/2}|\kappa_\sigma|$. Hence $\delta_\sigma\ne0$, $|\tau|\asymp|\kappa_\sigma|$, and $\|\nu_\sigma/\tau-\nu_\sigma/\kappa_\sigma\|_2\le C/n$. Using $\nu_\sigma/\tau=(\sinh\lambda/\delta_\sigma)\xi_\sigma$ proves \eqref{eq:sin-majority-key-constraints}, and the bound on $e_c$ proves \eqref{eq:sin-majority-bias-constraint}.
\end{proof}

\subsection{Volume integration}
\label{app:sinusoidal-majority-volume-proof}

\begin{proof}[Proof of \cref{thm:sinusoidal-majority-volume}]
For the upper bound, fix one of the finitely many activation patterns $\sigma$ and condition on $q$, the two symbol key columns, and all value and classifier coordinates except $b_{2,1}$, leaving the two positional key columns unconditioned. Each coordinate of \eqref{eq:sin-majority-key-constraints} restricts one independent positional key column to a slab whose conditional normalized width is at most
\begin{equation}
  C_B\min\left\{1,\frac{1}{n\beta_n\|q\|_\infty}\right\}.
  \label{eq:sin-majority-key-slab-width}
\end{equation}
The two columns use disjoint raw coordinates. After they are fixed, \eqref{eq:sin-majority-bias-constraint} restricts $b_{2,1}$ to an interval of normalized length $O_{B,h}(1/n)$. Summing over $\sigma$ and integrating $t=\|q\|_\infty$, whose density under the uniform square measure is proportional to $t$, gives
\begin{align}
  V_B^{\MAJORITY,\mathcal A_{\mathrm{sin,gen}}(h)}(n)
  &\le
  \frac{C_{B,h}}{n}
  \int_0^B
  t\min\left\{1,\frac{1}{n\beta_nt}\right\}^2dt
  \notag\\
  &\le
  \frac{C_{B,h}(1+\log(n\beta_n))}{n^3\beta_n^2}
  =O_{B,h}\!\left(\frac{1}{n^3\log n}\right).
  \label{eq:sin-majority-upper-integral}
\end{align}

For the lower bound, restrict the value and classifier coordinates to a fixed positive-volume open box on which the bias-free logit difference has the form $\ell^\top A+c_H$ for every attainable attention output, with constants $0<\delta_-\le\delta:=\ell^\top(v_1-v_0)\le\delta_+$ and $\xi=C_V^\top\ell$ satisfying $\|\xi\|_2/\delta\le1/100$. Such a box is obtained from an interior center with $v_0=0$, $v_1=\eta(1,0)^\top$, $C_V=0$, and one uniformly active hidden unit. Set
\begin{equation}
  A^*
  =\frac{v_0+\Gamma v_1}{1+\Gamma}+C_V\mu,
  \qquad
  b_{2,1}^*
  =b_{2,0}-c_H-\ell^\top A^*.
  \label{eq:sin-majority-lower-center}
\end{equation}
At this centered bias, the exact logit difference is
\begin{equation}
  (y(w))_1-(y(w))_0
  =
  \frac{\sum_j(w_j-1/2)\widehat\ell_j}{\mathsf D(w)},
  \qquad
  \widehat\ell_j
  =P_j\left[\frac{2\Gamma\delta}{1+\Gamma}+(\Gamma-1)\xi^\top(p_j-\mu)\right].
  \label{eq:sin-majority-lower-identity}
\end{equation}
If $|\lambda|\le c_1n^{-1/2}$ and $\|a+(\sinh\lambda/\delta)\xi\|_2\le c_2/n$, a uniform first-order expansion of \eqref{eq:sin-majority-lower-identity} gives $\widehat\ell_j/\tau=F_0+O(1/n)$ with $\tau=2\Gamma\delta/(1+\Gamma)$ and $F_0\ge3/4$. Therefore the numerator has the \MAJORITY\ sign and magnitude bounded below by a positive constant, while $\mathsf D(w)=O(n)$; an interval $|b_{2,1}-b_{2,1}^*|<c_3/n$ preserves strict correctness on every input.

It remains to measure a full-dimensional set satisfying these conditions. Restrict the query bias to
\begin{equation}
  q_1>0,
  \qquad
  |q_2|\le q_1/4,
  \qquad
  r:=\beta_nq_1\in\left[\frac{L}{n},\frac{c}{\sqrt n}\right].
  \label{eq:sin-majority-query-cone}
\end{equation}
For sufficiently small $c$, the symbol key columns may vary over their full raw squares while $|\lambda|\le c_1n^{-1/2}$. Conditional on these choices and on the decoder box, the two coordinates of the compensation condition restrict one entry of each positional key column to an interval of length $\Theta(1/(nr))$; fixed-width restrictions on the other entries keep these intervals inside the raw cube. Their joint normalized mass is therefore $\Omega_{B,h}(1/(n^2r^2))$, and the admissible output-bias interval contributes $\Omega_{B,h}(1/n)$. Since the query-cone measure after integrating $q_2$ is a constant multiple of $r\,dr/\beta_n^2$,
\begin{align}
  V_B^{\MAJORITY,\mathcal A_{\mathrm{sin,gen}}(h)}(n)
  &\ge
  \frac{c_{B,h}}{n^3\beta_n^2}
  \int_{L/n}^{c/\sqrt n}\frac{dr}{r}
  \notag\\
  &\ge
  \frac{c'_{B,h}}{n^3\log n}.
  \label{eq:sin-majority-lower-integral}
\end{align}
Every restricted raw coordinate varies over an interval of positive width and every unrestricted coordinate integrates out, so this is a full-dimensional lower bound. Together with \eqref{eq:sin-majority-upper-integral}, it proves the theorem.
\end{proof}

\section{Experimental Details and Selected Additional Results}
\label{app:experiments}

This appendix specifies the experimental protocol used for the main-text comparisons and reports the additional results referenced in the discussion of sinusoidal positional encoding. 

\subsection{Reported models and benchmark provenance}
\label{app:experiment-scope}

The baseline positional implementation is the model $\Apos(h)$ from \cref{sec: setup}, with quadratic positional coordinates $(j,j^2)$, and the selectively normalized implementation is $\Anorm(h)$ from \cref{sec:index-norm}. For the baseline \MAJORITY\ experiment, we use the unrestricted affine NoPE implementation of $\Acount(h)$. The additional sinusoidal \MAJORITY\ control is the fixed-frequency general model $\mathcal A_{\mathrm{sin}}(h)$ defined in \cref{app:sinusoidal-majority-model}. 

We use the same $20$ seeds, $2000$--$2019$, at each maximum training length
\begin{equation}
  N\in\{11,21,31,41,51\}.
  \label{eq:experimental-training-maxima}
\end{equation}
Training and qualification did not have access to the test banks, and every newly trained checkpoint was selected and frozen before its test evaluation.

\subsection{Training, selection, and initialization}
\label{app:training-protocol}

Training uses Adam with learning rate $3\times10^{-4}$, $\beta_1=0.9$, $\beta_2=0.999$, numerical stabilizer $10^{-8}$, no weight decay, batch size $256$, and gradient-norm clipping at one. Each update uses a fresh batch of independent uniformly random bits. For \INDEX, the index $k$ is sampled uniformly from the positions in the sampled sequence.  For \FIRST\ and \INDEX, one data length is sampled uniformly from $\{1,\ldots,N\}$ per batch.  For \MAJORITY, the sampled length is uniform over the odd integers in that range. Computation uses \texttt{float64}.  At data length $L$, attention scores are multiplied by $\log L$, without an additional $1/\sqrt d$ factor.

Every run is trained for at most $30{,}000$ updates and is validated every $500$ updates on a fixed bank of $1{,}000$ examples at length $N$. Training stops after three consecutive perfect validations. The selected checkpoint maximizes validation accuracy; ties are resolved first by lower cross-entropy and then by the earlier update. No input longer than $N$ is used in checkpoint selection. 

For the $\Apos(h)$ and $\Anorm(h)$ models, the vectors $a,r,s,b,c_K$ are initialized from centered normal distributions with standard deviations $0.1/N$, $0.1/N$, $0.1/N^2$, $0.1$, $0.1$,  \label{eq:positional-initialization-scales} respectively.  Other parameters use Xavier-uniform initialization and their biases are zero.  $\Apos(h)$/$\Anorm(h)$ pairs use identical initial tensors and the same training streams. For $\Acount$ and $\Asin$, content query/key blocks and the bounded sinusoidal positional blocks use Xavier-uniform initialization, while shared value, bias, and classifier blocks are copied from the paired $\Apos$ seed.

\subsection{Evaluation banks, metrics, and uncertainty}
\label{app:evaluation-banks}

For a maximum training length $N$, the common test grid is
\begin{equation}
  N,
  \quad 2N+1,
  \quad 3N,
  \quad 5N,
  \quad 8N+1,
  \quad 10N+1.
  \label{eq:experimental-test-grid}
\end{equation}
At each test length, all seeds and model variants for the same task use the same saved bank of $1{,}000$ examples.  For \FIRST, the first bit is exactly balanced and the remaining bits are random.  For \MAJORITY, $500$ random strings are paired with their complements. For \INDEX, each of $500$ random string--query pairs is paired with the complemented string and the same query.  

Test accuracy is the fraction correct on one saved bank.  The longest-test statistic is the mean across the $20$ seeds at $L=10N+1$. For coefficient analyses and other OOD summaries, the seed-level mean OOD accuracy is the equal-weight mean across the five test lengths strictly greater than $N$.  Confidence intervals are pointwise percentile $95\%$ intervals. 

\subsection{Coefficient analysis}
\label{app:index-coefficient-experiments}

The coefficient analysis uses the $100$ selected \INDEX\ checkpoints on $\Apos$: $20$ seeds at each of the five values of $N$ in \eqref{eq:experimental-training-maxima}.  For each checkpoint, we compute
\begin{equation}
  \chi=a^\top s,
  \qquad
  \ell=2b^\top s+a^\top r
\end{equation}
directly from the learned parameters.

To avoid treating differences among the five training maxima as evidence for a coefficient association, we rank each coefficient magnitude and the seed-level mean OOD accuracy separately within each value of $N$, and then pool the within-$N$ ranks.  The resulting correlations are $-0.655$ for $N^2|\chi|$ and $-0.516$ for $N|\ell|$.  The two-sided permutation test uses $10{,}000$ permutations and shuffles ranks only within the same value of $N$; both tests give $p<10^{-3}$. 

\subsection{Selected results across maximum training lengths}
\label{app:additional-training-lengths}
\label{app:index-norm-experiments}

\Cref{tab:selected-longest-test-results} reports the longest-test results.  Each entry is the mean percentage accuracy across $20$ seeds, followed by its pointwise $95\%$ seed-bootstrap interval. 

\begin{table}[t]
  \centering
  \caption{Accuracy at the longest tested length $L=10N+1$.
  Entries are percentages, with pointwise $95\%$ seed-bootstrap
  intervals in parentheses.}
  \label{tab:selected-longest-test-results}
  \scriptsize
  \resizebox{\linewidth}{!}{%
  \begin{tabular}{ccrrrr}
    \toprule
    $N$ & $L$
      & \FIRST\ ($\Apos$)
      & \MAJORITY\ ($\Acount$)& \INDEX\ ($\Apos$)
      & \INDEX\ ($\Anorm$) \\
    \midrule
    11 & 111 & 100.0 (100.0--100.0) & 83.8 (78.0--89.4) & 57.2 (55.9--58.6) & 63.4 (62.7--64.1) \\
    21 & 211 & 100.0 (100.0--100.0) & 91.3 (86.2--95.3) & 55.2 (54.1--56.5) & 74.1 (68.8--79.9) \\
    31 & 311 & 100.0 (100.0--100.0) & 87.1 (82.6--91.5) & 59.2 (58.3--60.0) & 90.6 (85.6--95.4) \\
    41 & 411 & 100.0 (100.0--100.0) & 88.5 (84.2--92.2) & 55.1 (54.5--55.8) & 84.4 (78.1--90.6) \\
    51 & 511 & 100.0 (100.0--100.0) & 82.6 (77.6--87.5) & 59.6 (58.6--60.6) & 85.1 (80.2--90.1) \\
    \bottomrule
  \end{tabular}%
  }
\end{table}

\subsection{Sinusoidal positional encoding on \MAJORITY}
\label{app:sinusoidal-majority-experiments}

Relevant results on the performance of $\Asin$ compared to $\Acount$ on \MAJORITY{} is displayed in \cref{tab:sinusoidal-majority}.
\begin{table}[t]
  \centering
  \caption{Mean OOD accuracy on \MAJORITY\ for $\Acount$ and $\Asin$
  general models.  Entries are percentages, averaged equally over the
  five test lengths above $N$, with $95\%$ confidence
  intervals in parentheses.}
  \label{tab:sinusoidal-majority}
  \begin{tabular}{crr}
    \toprule
    $N$ & NoPE General & Sinusoidal General \\
    \midrule
    11 & 91.6 (88.1--95.0) & 95.3 (93.6--96.8) \\
    21 & 95.1 (91.8--97.6) & 93.8 (91.5--95.9) \\
    31 & 92.9 (90.0--95.6) & 91.2 (88.2--94.0) \\
    41 & 93.4 (90.6--95.8) & 91.5 (88.7--94.1) \\
    51 & 89.8 (86.2--92.9) & 92.9 (90.0--95.6) \\
    \bottomrule
  \end{tabular}
\end{table}

\subsection{Sanity checks for $\Anorm(h)$ on other tasks}
\label{app:norm-sanity-checks}

We also compared $\Apos(h)$ to $\Anorm(h)$  on \FIRST\ and \MAJORITY.  For \FIRST, both positional models achieve $100\%$ mean accuracy at $L=10N+1$ for every $N$ in \eqref{eq:experimental-training-maxima}. 

For our setup of \MAJORITY, $\Apos$ and $\Anorm$ are equivalent, hence producing same results for every seed.

\end{document}